\def\eqref#1{equation~\ref{#1}}
\def\1{\bm{1}}
\DeclareMathAlphabet{\mathsfit}{\encodingdefault}{\sfdefault}{m}{sl}
\SetMathAlphabet{\mathsfit}{bold}{\encodingdefault}{\sfdefault}{bx}{n}
\newcommand{\E}{\mathbb{E}}
\newtheorem{lemma}{Lemma}
\newtheorem{theorem}{Theorem}
\definecolor{citecolor}{HTML}{B3A369}
\title{Mitigating Forgetting Between Supervised and Reinforcement Learning Yields Stronger Reasoners}
\author{
Xiangchi Yuan$^1$, Xiang Chen$^2$, Tong Yu$^2$,  Dachuan Shi$^1$, Can Jin$^3$,\\
\hspace*{0.01em} \textbf{Wenke Lee}$^1$, \textbf{Saayan Mitra}$^2$
\\[4pt]
$^1$Georgia Institute of Technology, {\tt $\{$xyuan300,dshi77,wenke$\}$@gatech.edu}\\
$^2$Adobe Research, {\tt $\{$xiangche,tyu,smitra$\}$@adobe.com}\\
$^3$Rutgers University, {\tt can.jin@rutgers.edu}
}
\begin{document}

\maketitle

\begin{abstract}
Large Language Models (LLMs) show strong reasoning abilities, often amplified by Chain-of-Thought (CoT) prompting and reinforcement learning (RL). Although RL algorithms can substantially improve reasoning, they struggle to expand reasoning boundaries because they learn from their own reasoning trajectories rather than acquiring external knowledge. Supervised fine-tuning (SFT) offers complementary benefits but typically requires large-scale data and risks overfitting. Recent attempts to combine SFT and RL face three main challenges: data inefficiency, algorithm-specific designs, and catastrophic forgetting.
We propose a plug-and-play framework that dynamically integrates SFT into RL by selecting challenging examples for SFT. This approach reduces SFT data requirements and remains agnostic to the choice of RL or SFT algorithm. To mitigate catastrophic forgetting of RL-acquired skills during SFT, we select high-entropy tokens for loss calculation and freeze parameters identified as critical for RL. Our method achieves state-of-the-art (SoTA) reasoning performance using only 1.5\% of the SFT data and 20.4\% of the RL data used by prior SoTA, providing an efficient and plug-and-play solution for combining SFT and RL in reasoning post-training.

\end{abstract}

\section{Introduction}
Recent Large Language Models (LLMs) has shown reasoning capability~\citep{jaech2024openai, guo2025deepseek, anthropic2025claude4}. The reasoning capability are highly dependent on the use of the Chain-of-Thought (CoT) thinking pattern trained by supervise fine-tuning (SFT) or reinforcement learning (RL). Although popular RL algorithms such as PPO~\citep{schulman2017proximal}, GRPO~\citep{guo2025deepseek}, and DAPO~\citep{yu2025dapo} are promising in multiple reasoning tasks, recent studies argue that RL training does not truly extend a model’s reasoning boundaries. Because RL trains the LLMs based on self-generated rollout samples, RL primarily reshapes the model’s internal probability distribution rather than enabling the acquisition of new knowledge~\citep{yue2025does,wang2025reinforcement}. Compared with the weaknesses of RL, SFT can import external out-of-distribution samples into the model, although it carries risks of memorization or overfitting~\citep{chu2025sft} and reduced generalization~\citep{yuan2025superficial}. 

Consequently, except for the common recipe~\citep{yang2024qwen2, shao2024deepseekmath} that uses SFT then RL to post-train the model, many recent studies with better reasoning performances~\citep{chen2025step,yan2025learning} explore how to leverage the strengths of SFT and RL while avoiding their drawbacks by strategically combining them into an interconnected RL–SFT optimization process to balance knowledge injection with self-exploration. However, these methods face one or more of the following three significant challenges: \textit{(i)} \uline{Dependence on large amounts of high-quality SFT data.} LUFFY~\citep{yan2025learning} and SRFT~\citep{fu2025srft} require extensive high-quality SFT answers to make the actor model imitate stronger offline teacher models or annotators. \textit{(ii)} \uline{Algorithm-specific designs.} ~\cite{yan2025learning, fu2025srft, liu2025superrl} are tailored to particular RL formulations to incorporate SFT supervision. ReLIFT~\citep{ma2025learning} avoids the first two limitations by interleaving SFT and RL, but it still suffers from the following challenge, which largely degrades the model’s reasoning capability. \textit{(iii)} \uline{Lack of handling catastrophic forgetting between SFT and RL.} Our analysis indicates that SFT induces catastrophic forgetting when jointly optimized with RL, and none of the current methods have proposed a corresponding solution.

In light of this, we propose a plug-and-play framework, \textbf{MIFO} (\textbf{MI}tigating \textbf{FO}rgetting Between SFT and RL), which integrates SFT with RL while addressing the above issues \textbf{simultaneously}. To tackle \textit{(i)} the need for large amounts of SFT data and \textit{(ii)} algorithm-specific designs, we dynamically interleave SFT within RL and select SFT examples based on rollout accuracy, while selecting tokens for the SFT loss according to entropy. This dynamic selection ensures that the actor model uses only the minimal necessary SFT data to acquire out-of-distribution reasoning knowledge. Because interleaved SFT and RL are not merged into a single optimization objective, our approach can be seamlessly applied to new RL or SFT algorithms.

Most importantly, MIFO is designed with core principle to solve \textit{(iii)} catastrophic forgetting. Our evidence shows that forgetting arises when extensive SFT updates overwrite the information acquired through RL. We therefore design MIFO with two complementary mechanisms. \textit{First}, the aforementioned data and token selection strategy not only enables MIFO plug-and-play with reduced data usage, but also limits the magnitude of SFT updates, which decreases the forgetting of the knowledge learned by previous RL. \textit{Second}, our analysis reveals an asymmetry between SFT and RL parameter updates. SFT tends to update parameters redundantly, so removing part of its updates does not harm performance significantly. RL, on the other hand, updates parameters in a more parsimonious way, and omitting part of its updates leads to clear performance degradation. Based on this observation, MIFO dynamically identifies RL-critical parameters, freezes them during SFT, and unfreezes them in the subsequent RL step. This design protects important RL parameter updates from being overwritten by SFT, effectively mitigating catastrophic forgetting.

The contributions of this work are summarized below:

\begin{itemize}
    \item We introduce \textbf{MIFO}, a plug-and-play framework that simultaneously addresses data hunger, algorithm-specific design, and catastrophic forgetting problems, providing an effective and efficient solution for combining SFT and RL in reasoning post-training.
    \item \textbf{MIFO} consists of two components: data processing and parameter manipulation. We sample rollouts to select only the necessary questions for SFT. Based on our analysis of SFT and RL update patterns, we maintain a map of RL-sensitive parameters, freezing them during SFT and unfreezing them for RL to promote long-term training stability.
    \item \textbf{MIFO} achieves SoTA reasoning performance with substantially less SFT and RL data, and is able to be adapted to new RL–SFT combinations with different algorithms. MIFO delivers high reasoning efficiency, with average response length comparable to strong baselines.
\end{itemize}

% \xiang{Recent paper: 1) On-Policy RL Meets Off-Policy Experts: Harmonizing Supervised Fine-Tuning and Reinforcement Learning via Dynamic Weighting. 2) On the Generalization of SFT: A Reinforcement Learning Perspective with Reward Rectification.}

\section{Related Works}
\label{sec:related:inter}
\paragraph{RL for Reasoning.}
RL emerges as an important paradigm to incentivize LLMs' reasoning capability \citep{jaech2024openai, shao2024deepseekmath, team2025kimi} since SFT is identified having the drawback of memorization effect \citep{chu2025sft, yuan2025superficial}. Popular approaches like PPO~\citep{schulman2017proximal}, GRPO~\citep{guo2025deepseek}, Dr.GRPO~\citep{liu2025understanding} and DAPO~\citep{yu2025dapo} have shown the great improvement for reasoning tasks. Recent research also tends to question how RL works. \cite{yue2025does} claims that RL doesn't expand the boundary of reasoning capability but lets the right answer emerge early for multiple tryouts. \citet{wang2025reinforcement} discovers that the main role of RL is to incentivize the knowledge pre-trained to the model. In light of this, our work aims to balance RL and SFT to learn the knowledge and then incentivize reasoning capability. 

\paragraph{Interplaying SFT and RL.}

Combining SFT with RL has emerged as an effective way for boosting LLM reasoning. Some methods interleave SFT with RL using demonstrations mined during training \citep{ma2025learning}, or imitate off-policy traces from stronger teachers \citep{yan2025learning,fu2025srft}; others balance the two signals via schedules (SASR) \citep{chen2025step}, prompt-level supervision during rollouts \citep{liu2025uft}, or joint loss formulations (SuperRL) \citep{liu2025superrl}. While successful, these approaches typically require substantial supervised data and/or algorithm-specific integration. Our design achieves great gains with much less training data and minimal coupling, and is closest in spirit to ReLIFT’s interleaving strategy \citep{ma2025learning}. 

% \xiang{Should we briefly distinguish our work from the ReLIFT?}

\section{Contrasting the Roles of SFT and RL in Reasoning Training}
\label{sec:motivation}
We outline two key insights about LLM weights updates under SFT and RL. \textbf{First}, SFT modifies parameters with redundancy; even if updates of some parameters are randomly removed, the model retains comparable performance. RL, in contrast, updates parameters in a more parsimonious manner, and removing even part of its updates leads to clear performance degradation. \textbf{Second}, combining SFT and RL during post-training introduces the risk of catastrophic forgetting, especially when SFT follows RL. We hypothesize that this occurs because SFT applies much larger updates than RL, thereby overwriting information learned during RL.

\subsection{SFT redundant, RL parsimonious}
\label{sec:observation}

\begin{figure}[t]
  \centering
  \begin{subfigure}[t]{0.5\linewidth}
    \centering
    \includegraphics[width=\linewidth]{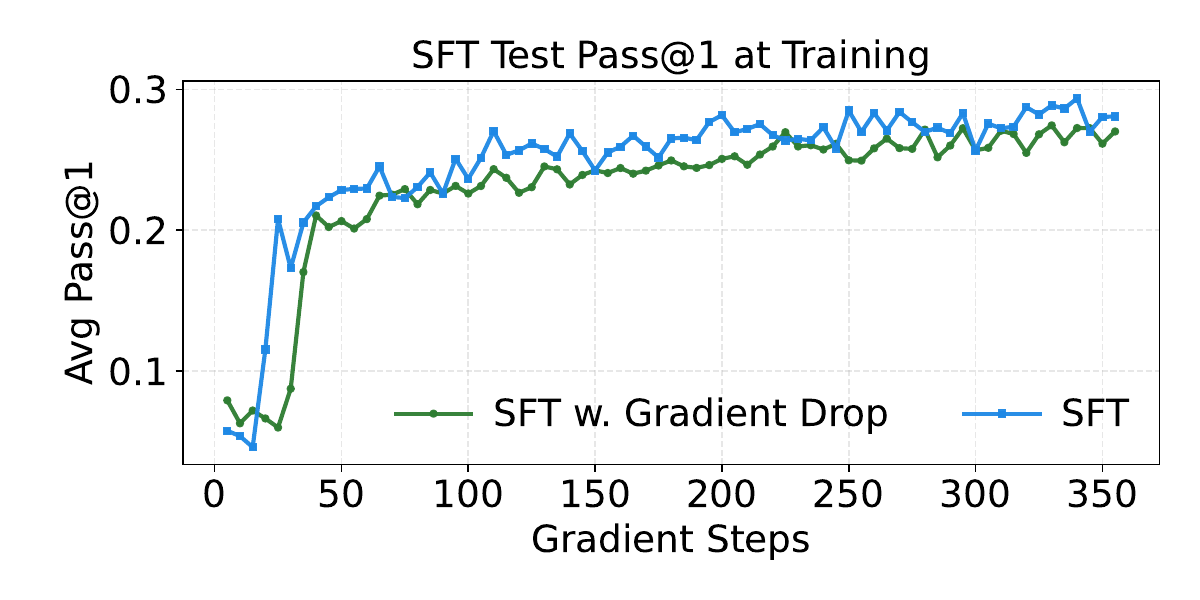} % 可加 [page=1]
  \end{subfigure}\hfill
  \begin{subfigure}[t]{0.5\linewidth}
    \centering
    \includegraphics[width=\linewidth]{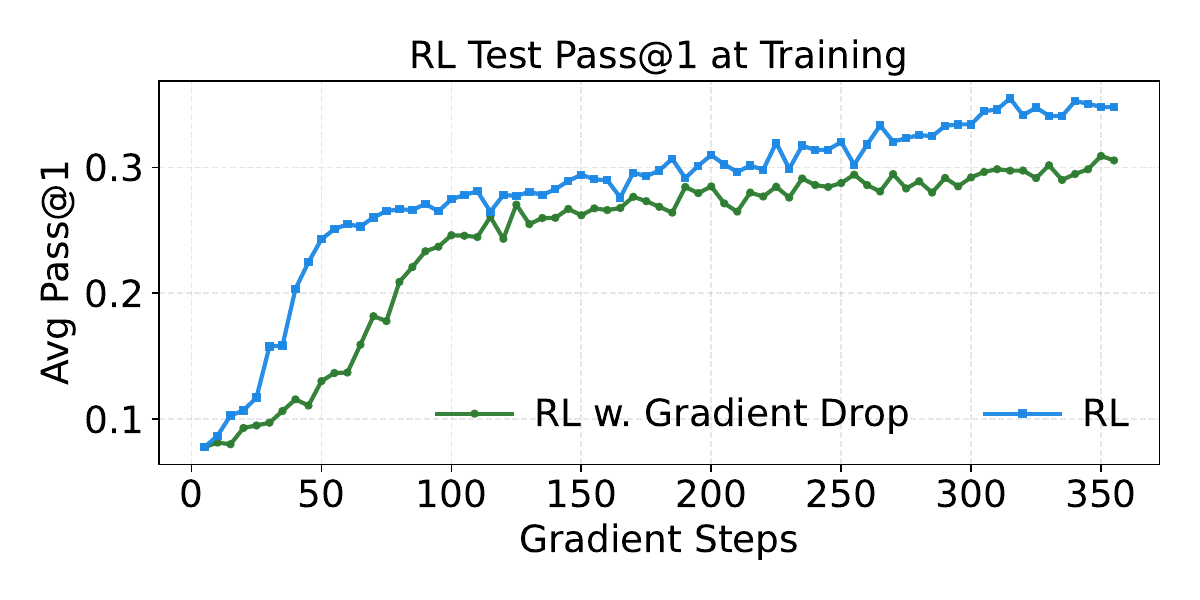}
  \end{subfigure}
  \caption{Dropping gradients when updating parameters causes more performance drop on RL.}
  \label{fig:two-pdfs-one-caption}
\end{figure}

Two experiments comprehensively provide the conclusion that SFT has more redundancy in parameter updating compared to RL, from gradient to parameter perspectives. We set the same learning rate and data amount, epochs \(=1\), and 
other configurations are the same as in Appendix~\ref{app:implementation}. We study two sparsification regimes on model updating by observing model performance (average pass@1 on 6 reasoning test sets).
\textit{(i) Online gradient sparsification.}
Let \(p_{\text{on}}\in[0,1]\) denote the drop rate per step. At each gradient step \(t\), we have model parameters \(\theta_t\in\mathbb{R}^d\) with loss gradient \(g_t=\nabla_\theta\mathcal{L}(\theta_t)\). We draw an i.i.d.\ coordinate mask
\(m_t\sim \mathrm{Bernoulli}(1-p_{\text{on}})^d\) (1=\textit{keep}, 0=\textit{drop}) and update with the masked gradient
\[
\tilde g_t \;=\; m_t\odot g_t,\qquad
\theta_{t+1}\;=\;\mathrm{AdamW}(\theta_t,\tilde g_t),
\]
\begin{wrapfigure}{r}{0.35\textwidth}
\vspace{-0.2in}
  \centering
  \includegraphics[width=0.35\textwidth]{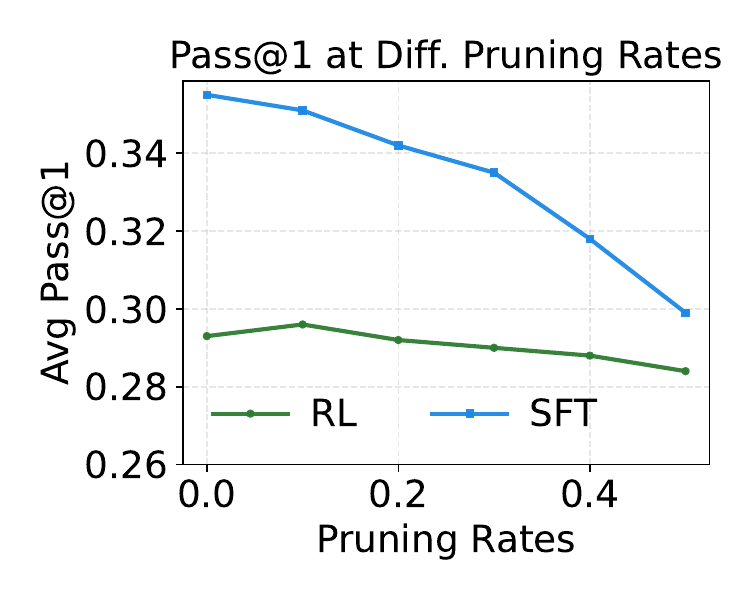}
  \caption{Compared w. SFT, RL has a notable drop when the pruning rate $p_{post}$ increases.}
% \vspace{-0.2in}
\label{fig:prune-sft-rl}
\end{wrapfigure}
where $\mathrm{AdamW}$ represents one AdamW optimization step. In our runs we set \(p_{\text{on}}=0.5\), i.e., randomly drop 50 percent of gradient coordinates each step, independently across steps and coordinates.
\textit{(ii) Post-hoc update sparsification.}
Let \(p_{\text{post}}\in[0,1]\) denote the pruning rate applied once after training. First train unperturbed for \(T\) steps to obtain \(\theta_0\to\theta_T\), define the net update \(\Delta \theta=\theta_T-\theta_0\), then sample a single mask
\(m\sim \mathrm{Bernoulli}(1-p_{\text{post}})^d\) and form
\[
\theta_T^{\text{post}} \;=\; \theta_0 \;+\; m\odot \Delta \theta,
\]
i.e., randomly prune a \(p_{\text{post}}\)-fraction of the total parameter change coordinatewise.
Both regimes are instantiated identically for SFT and RL (only their loss \(\mathcal{L}\) differs).

Both experiments indicate that SFT produces more redundant parameter updates, whereas RL induces more parsimonious updates. Dropping a portion of SFT gradients or parameter updates yields comparable performance, but doing so for RL leads to clear degradation.
\textit{(i) Online gradient sparsification.}
Figure~\ref{fig:two-pdfs-one-caption} shows that during training, the SFT w. Gradient Drop model matches the SFT baseline after roughly the 40th step. By contrast, the RL w. Gradient Drop model suffers a larger performance drop relative to the RL baseline, and the gap only begins to close around the 110th step. At convergence, the average pass@1 (percentage points) gap is 0.7 for SFT versus 4.23 for RL 
% \xiang{do you mean 0.7 point and 4.23 point? And similarly for the numbers in this paragraph, should we show the numbers with percentage or not?}
. This highlights SFT’s higher redundancy compared with RL.
\textit{(ii) Post-hoc update sparsification.}
Figure~\ref{fig:prune-sft-rl} varies the pruning rate from 0 to 0.5. For RL, the average pass@1 drops consistently from 35.5 to 29.9 (-5.6). For SFT, accuracy initially increases from 29.3 to 29.6 (+0.3) at pruning rate 0.1, then gradually decreases to 28.4 (-0.9). These results suggest that SFT exhibits both redundancy and overfitting (a small amount of pruning improves performance), whereas RL updates are more parsimonious, with higher prune rates steadily harming performance. We further provide explanations on the reason for these experiment designs in Appendix~\ref{app:observation-design}, and extra experiment in Appendix~\ref{app:extra-drop-exp}. To better understand this phenomenon, we also provide a corresponding theoretical analysis and proof in Appendix~\ref{app:proof}.

\begin{figure}[]
    \centering
    \includegraphics[width=1\linewidth]{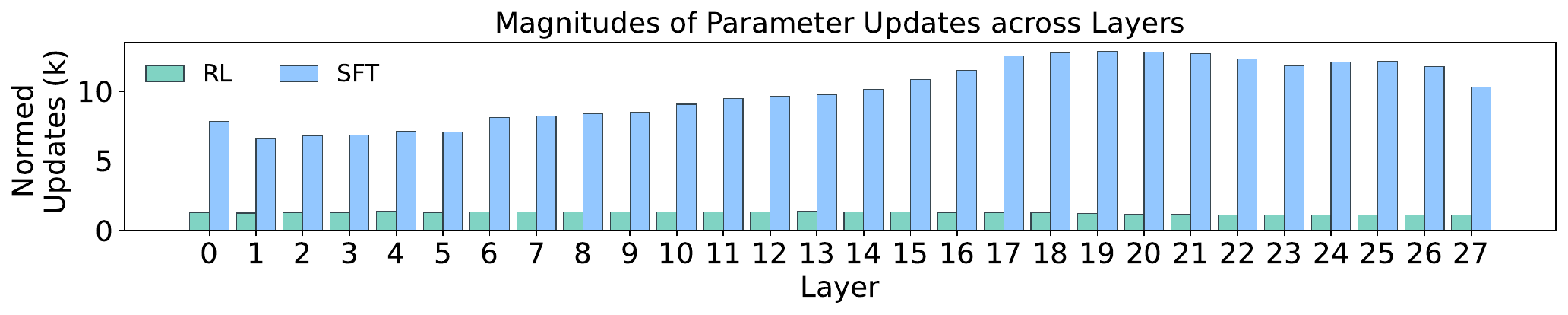}  
    \caption{SFT induces much more parameter updating on magnitude, compared with RL.}
    \label{fig:SFT-RL-updating}
\end{figure}

\subsection{SFT Forgets RL}

It is now standard in post-training to apply SFT as a cold start and then use RL to enhance reasoning capability~\citep{liu2025acereason,yang2024qwen2,shao2024deepseekmath}. A key practical reason is that reversing the order causes substantial degradation when switching from RL to SFT~\citep{fu2025srft}, which also harms final performance. We hypothesize that this degradation occurs because SFT induces much larger magnitudes of parameter updates, thereby overwriting RL-induced updates and triggering catastrophic forgetting of knowledge learned by RL. This hypothesis is supported by the following observation. Using the same configuration as Section~\ref{sec:expsetup} (with epochs set to \(1\)), we measure the \textbf{magnitude of parameter updates} for SFT or RL from the start step 0 to the end of training in step $T$, defined as \(\Delta \theta = \left\|\theta_T - \theta_0 \right\|_2\). Figure~\ref{fig:SFT-RL-updating} shows that across all layers, SFT produces much larger parameter changes than RL, which can overwrite RL updates and cause catastrophic forgetting of RL-acquired information. The forgetting likely stems from the heterogeneity of data and training objectives between SFT and RL. Because SFT updates are larger, the effect manifests as “SFT forgets RL,” and the conclusion may be reversed if RL performed the larger updates. 
% \xiang{the y-axis in Fig. 3 is not very clear. Does it mean magnitude of changes? or frequency, or number of parameters?}

Recent work has combined SFT and RL within a single optimization procedure and achieved SoTA performance ( Section~\ref{sec:related:inter}). While the idea of dynamically balancing learning from external distributions via SFT and self-exploration via RL is compelling, Figure~\ref{fig:SFT-RL-updating} suggests that the full potential of SFT and RL remains untapped because prior methods did not explicitly mitigate this forgetting risk.

\subsection{Motivation}
When SFT overwrites RL due to excessive updates, can we reduce its update magnitude to mitigate this forgetting? Section~\ref{sec:observation} shows that SFT exhibits substantial redundancy in parameter updates relative to RL and carries a notable risk of overfitting. Our motivation follows directly: \textit{We combine SFT with RL for reasoning post-training while constraining SFT-induced parameter updates to reduce catastrophic forgetting on the information learned by RL.}

\section{Method}
\label{sec:method}

\begin{figure}[]
    \centering
    \includegraphics[width=1\linewidth]{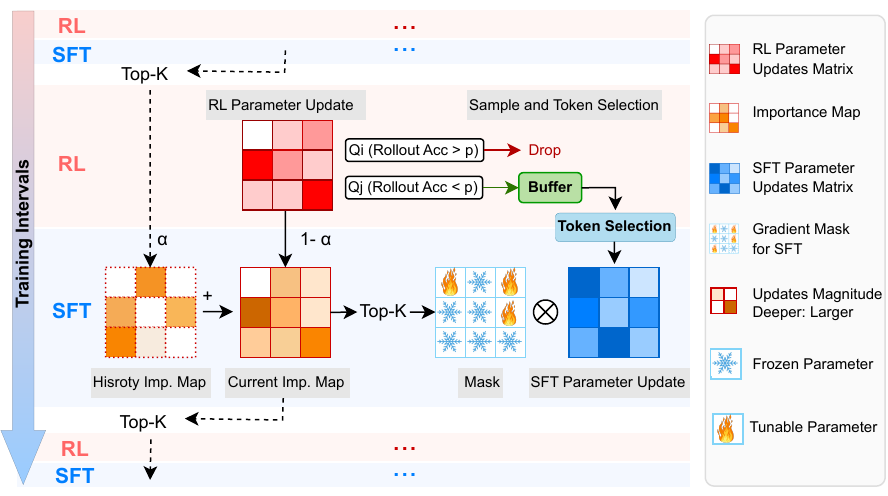}  
    \caption{MIFO is a training pipeline with multiple connected RL$\to$ SFT intervals. In \textbf{RL} training: it selects data for SFT and decides the important RL parameters at the end; In \textbf{SFT} training: RL-important parameters are frozen, and only high-entropy tokens are used for loss calculation.}
    \label{fig:method}
\end{figure}

% Overall, I feel it is a bit difficult to understand this framework without prior knowledge. How about showing the framework in 3 rows: row 1 - explain details from RL to SFT; row 2 - from SFT to RL; row 3: end-to-end pipeline?

Based on the insights in Section~\ref{sec:motivation}, we propose \textbf{MIFO} as illustrated in Figure~\ref{fig:method}, which incorporates interleaved RL and SFT (multiple connected RL$\to$ SFT intervals). The central principle is to limit SFT parameter updates, thereby reserving parameter space for the more parsimonious updates of RL. Guided by this, MIFO introduces two key designs:
\textit{Data processing}. At the sample level, we use RL rollouts to identify questions the model currently struggles with, and we curate an online SFT set containing only these cases with verified solutions. At the token level, we focus the SFT objective on the most informative, high-uncertainty tokens rather than spreading gradients over easy tokens.
\textit{Parameter freezing}. To directly reduce forgetting between stages, we track which parameters were most updated during RL and temporarily freeze these RL-critical components during subsequent SFT, then unfreeze them before the next RL step.

\subsection{Data Processing}

\paragraph{Interleaving SFT and RL with Buffer.}  
We adopt GRPO (can be changed to another RL algorithm) for RL training, and select challenging data samples according to the GRPO rollout accuracy for the following SFT training. Let the policy before and after the update be \( \pi_{\theta_{\text{old}}} \) and \( \pi_{\theta} \). Given a question \( q \), a set of solutions \( \{o_i\}_{i=1}^{N} \) generated by \( \pi_{\theta_{\text{old}}} \), and a reward function \( R(\cdot) \), the objective is
\begin{equation*}
\mathcal{L}_{\text{GRPO}}(\theta)
= \frac{1}{\sum_{i=1}^{N} |o_i|}
\sum_{i=1}^{N} \sum_{t=1}^{|o_i|}
\min \!\left[
r_{i,t}(\theta)\, A_i,\;
\operatorname{clip}\!\big(r_{i,t}(\theta),\, 1-\epsilon,\, 1+\epsilon\big)\, A_i
\right],
\end{equation*}
where
$r_{i,t}(\theta)
= \frac{\pi_{\theta}(o_{i,t}\mid q, o_i{<t})}{\pi_{\theta_{\text{old}}}(o_{i,t}\mid q, o_i{<t})},
A_i
= \frac{R(o_i) - \operatorname{mean}\!\left(\{R(o_j)\}_{j=1}^{N}\right)}
{\operatorname{std}\!\left(\{R(o_j)\}_{j=1}^{N}\right)}.$
For each question \(q\), GRPO produces \(N\) solutions and yields an empirical accuracy \(\operatorname{acc}(q)\).
We then collect questions with \(\operatorname{acc}(q) \leq p\). For these challenging cases, where the policy exhibits a low success rate under self-exploration, we assume that additional out-of-distribution knowledge is required. We obtain high-quality solutions from a stronger reasoning model or from human experts, and insert them into an SFT buffer \(\text{Buffer}_{\text{FT}}\):
\[
\text{Buffer} \;=\; \big\{(q, s)\;\big|\; \operatorname{acc}(q) \leq p,\; s = D(q),\; \operatorname{extract}(s) = a \big\},
\qquad
\text{Buffer}_{\text{FT}} \leftarrow \text{Buffer}_{\text{FT}} \cup \text{Buffer},
\]

where \(D(q)\) denotes a CoT solution obtained either offline or online from an external model or a human annotator, \(\operatorname{extract}(s)\) is the final answer extracted from \(s\), and \(a\) is the ground-truth answer for \(q\). We retain only pairs satisfying \(\operatorname{extract}(s) = a\). 
Once the buffer size exceeds a preset threshold \(S\) (i.e., \(|\text{Buffer}_{\text{FT}}| \ge S\)), training shifts from RL to SFT. Then, the training will shift from SFT to RL when samples in buffer are all trained and the buffer is emptied.

ReLIFT~\citep{ma2025learning} first proposed using SFT buffer, while it is restricted to \(\operatorname{acc}(q) = 0\). We generalize this by using a threshold \(\operatorname{acc}(q) \leq p\), which admits more informative yet solvable cases. The contribution is not the threshold alone. It indicates that our subsequent modules explicitly mitigate forgetting to enlarge the usable data pool, and maximize the learning on existing knowledge of data to improve reasoning capability. Detailed discussion is provided in Appendix~\ref{app:relationship-works}.

\paragraph{SFT with High-entropy Tokens.}
After RL, MIFO SFT the model only on high-entropy tokens from \(\text{Buffer}_{\text{FT}}\) to limit the magnitude of parameter updates. Entropy~\citep{shannon1948mathematical} describes the uncertainty, and high-entropy tokens mark positions of model uncertainty. Concentrating learning on these tokens encourages acquisition of missing knowledge while avoiding unnecessary fitting of low-entropy (confident) tokens, which can exacerbate overfitting or lead to entropy collapse.
For each token position \(t\) in a response, define the entropy
$H_t \;=\; - \sum_{j=1}^{V} p_{t,j}\,\log p_{t,j},$
where  $\mathbf{p}_t \;=\; \pi_\theta(\cdot \mid q, s_{<t})$, and \(V\) is the vocabulary size. We then compute an example-specific threshold \(\tau_\rho(q,s)\) such that the top \(\rho \in (0,1]\) fraction of tokens by entropy in \((q,s)\) satisfy \(H_t \ge \tau_\rho(q,s)\). The SFT objective minimizes loss only over these high-entropy tokens:
\[
\mathcal{L}_{\text{SFT}}(\theta)
\;=\;
-\,\frac{1}{|s|}
\sum_{t=1}^{|s|}
\mathbb{I}\!\left[H_t \ge \tau_\rho(q,s)\right]\,
\log \pi_\theta\!\left(s_t \mid q, s_{<t}\right),
\qquad (q,s)\in \text{Buffer}_{\text{FT}}.
\]

\subsection{Parameter Freezing}
The second component of \textbf{MIFO} monitors RL-induced parameter updates, and freezes the most updated (important) parameters during RL for the subsequent SFT phase, thereby preserving RL-acquired knowledge and mitigating catastrophic forgetting caused by SFT. Let \(\boldsymbol{\theta}_i=\{\theta_i^{1},\dots,\theta_i^{d}\}\) denote the model parameters (\(d\) is the number of named parameters) in the \(i\)-th RL\(\to\)SFT interval. 
To reduce forgetting, we maintain a history importance map \(\mathbf{C}_i\in\mathbb{R}^d\) that tracks important RL updates, and we freeze parameters with the largest importance scores for the following SFT. Let \(\boldsymbol{\theta}_{\mathrm{RL,start,i}}\) and \(\boldsymbol{\theta}_{\mathrm{RL,end,i}}\) be the parameters at the start and end of the RL session in interval \(i\). Define the RL update \(\Delta\boldsymbol{\theta}_{\mathrm{RL},i}\) by
\[
\Delta\boldsymbol{\theta}_{\mathrm{RL},i}^{j}
= \big\|\theta_{\mathrm{RL,end}, i}^{j} - \theta_{\mathrm{RL,start}, i}^{j}\big\|_2,
\quad j=1,\dots,d.
\]

Initialize \(\mathbf{C}_0=\mathbf{0}\). Let \(\alpha\in[0,1)\) be the history coefficient and define the current importance map
\[
\tilde{\mathbf{C}}_i \;=\; \alpha\,\mathbf{C}_{i-1} \;+\; (1-\alpha)\,\Delta\boldsymbol{\theta}_{\mathrm{RL},i}.
\]
Here \(\mathbf{C}\) serves as history importance map for RL parameter updates, with global information along the training. We then select the top-\(k\) entries by magnitude to find the most-learned (important) parameters during RL training,
$\mathcal{I}_i \;=\; \mathrm{TopK}\!\big(\tilde{\mathbf{C}}_i,\,k\big),$
and define a binary mask \(\mathbf{M}_i\in\{0,1\}^d\) with
\[
\mathbf{M}_i^{j}=\mathbf{1}[\,j\in\mathcal{I}_i\,], \qquad
\mathbf{C}_i \;=\; \mathbf{M}_i \odot \tilde{\mathbf{C}}_i,
\]
where \(\odot\) denotes element-wise multiplication. The masked \(\mathbf{C}_i\) is passed to the next RL\(\to\)SFT interval as history.
To protect against catastrophic forgetting in the subsequent SFT phase, we freeze RL-important parameters indexed by \(\mathcal{I}_i\) by zeroing their SFT gradients:
\[
\nabla_{\boldsymbol{\theta}_i}\mathcal{L}_{\mathrm{SFT}}
\;\leftarrow\;
(\mathbf{1}-\mathbf{M}_i)\odot \nabla_{\boldsymbol{\theta}_i}\mathcal{L}_{\mathrm{SFT}}.
\]

After SFT completes, all parameters are unfrozen for the next RL session. RL updates produce smaller parsimonious updates, which almost do not induce forgetting of SFT-acquired knowledge. This RL\(\to\)SFT cycle repeats throughout post-training, promoting long-term training stability. When \(\alpha=0\), we denote the variant as \(\mathbf{MIFO}^\dagger\), meaning no historical RL-importance is carried across intervals. Pseudocode of MIFO is provided in Algorithm~\ref{alg:mifo}.

\section{Experiment}
\subsection{Setups}
\label{sec:expsetup}

\paragraph{Evaluation.}
We evaluate on five widely used mathematical reasoning benchmarks: AIME 2024, AIME 2025, AMC~\citep{li2024numinamath}, OlympiadBench~\citep{he2024olympiadbench}, and MATH500~\citep{hendrycks2021measuring}. For the smaller test sets (AIME 2024, AIME 2025, and AMC), we report \textit{avg@32} (\textit{pass@1} for Section~\ref{sec:motivation}). For the larger test sets (OlympiadBench and MATH500), we report \textit{pass@1}. We also include MMLU-Pro~\citep{wang2024mmlu} to assess the model’s generalized reasoning ability.

\paragraph{Datasets.}
For the training set, we utilize the version of the dataset OpenR1-Math-46k from~\cite{yan2025learning}, which contains a subset of OpenR1-Math-220k~\citep{huggingface2025openr1}, with prompts from NuminaMath~\cite{li2024numinamath} in 45.8k prompts and their CoT demonstrations.

\paragraph{Baselines.}
We compare against four categories of baselines.
(i) \textbf{Base models}: \textbf{Qwen-Math-2.5} and \textbf{Qwen-Math-2.5-Instruct}. 
(ii) \textbf{SFT-only}: \textbf{SFT}, a supervised fine-tuned variant of Qwen-Math-2.5. 
(iii) \textbf{RL-only}: \textbf{RL}, which is trained by vanilla GRPO; \textbf{Oat-Zero}~\citep{liu2025understanding}, which uses GRPO with variance removal in advantage computation and token-level normalization; \textbf{PRIME-Zero}~\citep{cui2025process}, which employs implicit process rewards; \textbf{OpenReasonerZero}~\citep{hu2025open}, which uses rule-based rewards. 
(iv) \textbf{Joint SFT+RL}: \textbf{SFT$\rightarrow$RL}, which first applies SFT then runs RL; \textbf{LUFFY}~\citep{yan2025learning}, which integrates on-policy rollouts with off-policy reasoning traces within RL training. \textbf{SRFT}\citep{fu2025srft}, a single-stage method that unifies both fine-tuning paradigms through weighting mechanisms (only the 7B model accessible).

% \xiang{When looking at table 1 & 2, it is not obviously for reader to understand our method, and what is the difference between the two. Should we say: \textbf{MIFO} and $\textbf{MIFO}^\dag$ are our proposed methods. $\textbf{MIFO}^\dag$ is a special case where we choose the top-K important parameters for RL without considering the historical parameter changes during the RL training" or something like this. Either here (close to the table) or place it in the table caption.}
We provide other implementation details in Appendix~\ref{app:implementation}.

\subsection{Main Results} 
\label{sec:exp:main}

\begin{table}[]
\centering
\caption{Overall reasoning performances $\uparrow$ and average response length (\#token) $\downarrow$ on \textbf{Qwen2.5-Math-1.5B}. \textbf{Bold} and \uline{underline} indicate the best and second-best results, respectively.}

\definecolor{citecolor}{HTML}{B3A369} % 定义颜色

\resizebox{\textwidth}{!}{%
\begin{tabular}{lcccccccc}
\toprule
\textbf{Model}                           & \textbf{AIME-24} & \textbf{AIME-25} & \textbf{AMC} & \textbf{MATH-500} & \textbf{Olympiad} & \textbf{MMLU-Pro} & \textbf{Average} & \textbf{Length} \\ \midrule
\rowcolor{gray!10}\textit{Base}                   &               &               &               &               &               &               &               &              \\
\textbf{Qwen-Math-1.5B}                           & 3.0           & 1.4           & 19.4          & 44.4          & 16.7          & 5.0           & 15.0          & 4043         \\
\textbf{Qwen-Math-1.5B-Instruct}                  & 8.1           & 6.6           & 36.9          & 66.0          & 30.7          & 24.2          & 28.8          & 5806         \\ \midrule
\rowcolor{gray!10}\textit{Supervised Fine-tuning} &               &               &               &               &               &               &               &              \\
\textbf{SFT}                                      & 12.7          & 13.0          & 40.9          & 71.8          & 33.8          & 23.5          & 32.6          & 13403        \\ \midrule
\rowcolor{gray!10}\textit{Reinforcement Learning} &               &               &               &               &               &               &               &              \\
\textbf{RL}                                       & 10.2          & 8.7           & 44.7          & 71.4          & 34.5          & 28.7          & 33.0          & \textbf{913} \\
\textbf{Oat-Zero}                                 & 17.8          & 10.8          & 47.2          & 73.0          & 35.8          & 20.9          & 34.3          & 2605         \\
\textbf{OpenReasoner}                             & 3.2           & 1.3           & 27.6          & 55.2          & 24.0          & 27.9          & 23.2          & 2445         \\ \midrule
\rowcolor{gray!10}\textit{SFT+RL}                 &               &               &               &               &               &               &               &              \\
\textbf{SFT$\to$RL}   & 12.5 & 9.2 & 43.4 & 72.0 & 34.9 & 29.0 & 33.5 & 12601 \\
\textbf{ReLIFT}                                   & 13.1          & 8.8           & 42.5          & 73.6          & 36.0          & 30.1          & 34.0          & 4421         \\
\textbf{LUFFY}                                    & 15.9          & \textbf{13.1} & 46.3          & \textbf{80.0} & 41.0          & 34.2          & 38.4          & 3406         \\
\rowcolor{citecolor!20}\textbf{$\textbf{MIFO}^\dag$} & \uline{19.0}    & \uline{12.6}    & \uline{49.1}    & 78.0          & \textbf{43.9} & \textbf{36.2} & \uline{39.8}    & \uline{985}    \\
\rowcolor{citecolor!30}\textbf{MIFO}              & \textbf{19.2} & 12.0          & \textbf{50.3} & \uline{78.8}    & \uline{43.3}    & \uline{36.1}    & \textbf{40.0} & 2518         \\ \bottomrule
\end{tabular}
}
\label{tab:1.5b-main}
\end{table}

\begin{table}[]
\centering
\caption{Overall reasoning performances $\uparrow$ and average response length (\#token) $\downarrow$ on \textbf{Qwen2.5-Math-7B}. \textbf{Bold} and \uline{underline} indicate the best and second-best results, respectively.}

\resizebox{\textwidth}{!}{%
\begin{tabular}{lcccccccc}
\toprule
\textbf{Model} & \textbf{AIME-24} & \textbf{AIME-25} & \textbf{AMC} & \textbf{MATH-500} & \textbf{Olympiad} & \textbf{MMLU-Pro} & \textbf{Average} & \textbf{Length} \\ \midrule
\rowcolor{gray!10}\textit{Base} & & & & & & & & \\
\textbf{Qwen-Math}          & 5.1  & 1.4  & 22.6  & 37.2  & 19.0  & 30.5  & 19.3  & 2408  \\
\textbf{Qwen-Math-Instruct} & 9.7  & 8.2  & 39.9  & 77.2  & 34.1  & 33.0  & 33.7  & 14179 \\ \midrule
\rowcolor{gray!10}\textit{Supervised Fine-tuning} & & & & & & & & \\
\textbf{SFT} & 26.9 & 23.1 & 58.9 & 85.0 & 50.7 & 48.6 & 48.9 & 10166 \\ \midrule
\rowcolor{gray!10}\textit{Reinforcement Learning} & & & & & & & & \\
\textbf{RL}           & 22.1 & 13.9 & 59.5 & 84.2 & 44.7 & 48.3 & 45.5 & \textbf{653} \\
\textbf{Oat-Zero}     & 33.4 & 11.9 & 61.2 & 78.0 & 43.4 & 41.7 & 44.9 & 1767 \\
\textbf{OpenReasoner} & 16.5 & 15.0 & 52.1 & 82.4 & 47.1 & 58.7 & 45.3 & 1652 \\ \midrule
\rowcolor{gray!10}\textit{SFT+RL} & & & & & & & & \\
\textbf{SFT$\to$RL}   & 29.6 & 19.6 & 61.1 & 84.2 & 51.6 & 49.6 & 49.3 & 10832 \\
\textbf{ReLIFT}       & 27.2 & 20.1 & 64.9 & 85.2 & 53.6 & 52.5 & 50.6 & 1299 \\
\textbf{LUFFY}        & 27.2 & 18.1 & 61.1 & 85.6 & 53.6 & 52.6 & 49.7 & 1762 \\
\textbf{SRFT}         & \textbf{33.4} & 23.1 & \uline{65.6} & \uline{88.6} & \uline{57.9} & \textbf{55.8} & \uline{54.1} & 1112 \\
\rowcolor{citecolor!20}\textbf{$\textbf{MIFO}^\dag$} & 28.2 & \textbf{28.9} & \uline{65.6} & \textbf{89.0} & 56.4 & 54.0 & 53.7 & 1371 \\
\rowcolor{citecolor!30}\textbf{MIFO} & \uline{31.7} & \uline{26.0} & \textbf{66.2} & 87.8 & \textbf{59.4} & \uline{54.8} & \textbf{54.3} & \uline{1067} \\ \bottomrule
\end{tabular}
}
\label{tab:7b-main}
\vspace{-0.2in}
\end{table}

Tables~\ref{tab:1.5b-main} and \ref{tab:7b-main} present the main results regarding reasoning performance, response efficiency, and training data efficiency for baselines, our method \textbf{MIFO}, and $\textbf{MIFO}^\dag$. $\textbf{MIFO}^\dag$ is a special case when the history coefficient $\alpha$ is set to 0.

\paragraph{Reasoning Performance.}
Tables~\ref{tab:1.5b-main} and \ref{tab:7b-main} show that across five competition-level benchmarks and one out-of-distribution benchmark, our method consistently attains the best or second-best results. On the 1.5B model, it yields large gains on AMC (+4.0 over LUFFY) and OlympiadBench (+7.9 over ReLIFT). On the 7B model, it substantially outperforms SRFT on AIME 2025 (+5.8) and OlympiadBench (+1.5). Overall, our approach achieves the best average reasoning performance for both 1.5B and 7B models, highlighting the effectiveness of MIFO.

\paragraph{Data Efficiency.}
Because of the introduced data processing and parameter freezing design, MIFO maximizes the utility of existing data and thus reduces data requirements. Figure~\ref{fig:template-data} (right) compares data usage among joint SFT+RL methods. MIFO uses only 1.5\% of the SFT data and 20.4\% of the RL data required by the prior SOTA SRFT, and 5.8\% of the SFT data and 41.6\% of the RL data used by LUFFY. MIFO employs a similar data budget compared to ReLIFT, and achieves notably better performance. More details and comparison for 1.5B model is listed in Appendix~\ref{app:implementation}.

\paragraph{Response Length Efficiency.} Tables~\ref{tab:1.5b-main} and \ref{tab:7b-main} show that MIFO yields more concise reasoning traces beyond accuracy. On the 1.5B model, MIFO and $\text{MIFO}^\dag$ produce outputs with average lengths of 2{,}518 and 913 tokens, respectively, shorter than all baselines except the RL-only model. On the 7B model, the gap is even more pronounced: MIFO and $\text{MIFO}^\dag$ average 1{,}067 and 1{,}371 tokens, respectively, with only the RL-only model shorter at 653 tokens. These results indicate that MIFO improves reasoning while also producing more efficient responses with less tokens.

Together, these results demonstrate that MIFO not only surpasses existing baselines in performance but also produces models that are more efficient in both data usage and output length, offering stronger reasoning capability and greater efficiency for training and deployment.

\subsection{Ablation Study}
\label{sec:ablation-study}
\begin{table}[]
\centering
\caption{Ablation study on Qwen2.5-Math-1.5B model. 
\textbf{Bold} and \uline{underline} indicate the best and second-best results, respectively.}

\resizebox{\textwidth}{!}{%
\begin{tabular}{lcccccccc}
\toprule
\textbf{Model} & \textbf{AIME-24} & \textbf{AIME-25} & \textbf{AMC} & \textbf{MATH-500} & \textbf{Olympiad} & \textbf{MMLU-Pro} & \textbf{Average} & \textbf{Length} \\
\midrule
\textbf{Qwen2.5-Math}        & 3.0  & 1.4  & 19.4 & 44.4 & 16.7 & 5.0  & 15.0 & 4043 \\
\textbf{+ Interleave}          & 13.1 & 8.8  & 42.5 & 73.6 & 36.0 & 30.1 & 34.0 & 4421 \\
\textbf{+ Interleave + ES}     & \uline{16.6} & \uline{12.5} & \uline{49.1} & \textbf{79.4} & \uline{42.3} & 35.2 & \uline{39.2} & \uline{3246} \\
\textbf{+ Interleave + PF}     & 15.7 & \textbf{14.5} & 45.8 & 77.0 & 41.7 & \uline{35.9} & 38.4 & 4490 \\
\rowcolor{citecolor!30}\textbf{MIFO} & \textbf{19.2} & 12.0 & \textbf{50.3} & \uline{78.8} & \textbf{43.3} & \textbf{36.1} & \textbf{40.0} & \textbf{2518} \\
\bottomrule
\end{tabular}
}
\label{tab:ablation-1.5b}
\end{table}

\begin{figure}[]
  \centering
  \begin{subfigure}[t]{0.26\linewidth}
    \centering
    \includegraphics[width=\linewidth]{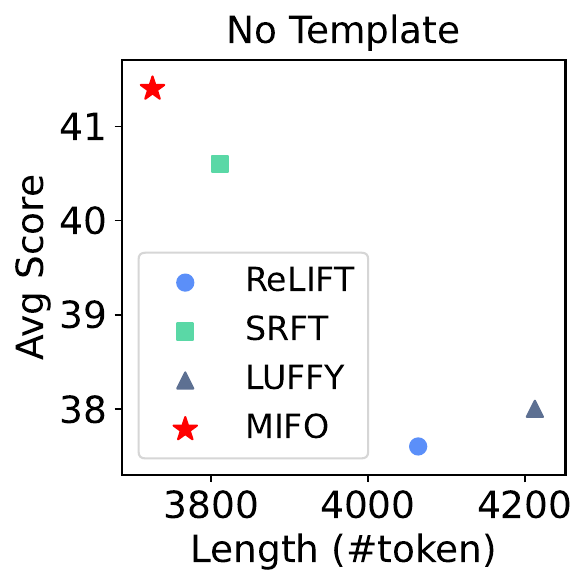}
  \end{subfigure}\hfill
  \begin{subfigure}[t]{0.26\linewidth}
    \centering
    \includegraphics[width=\linewidth]{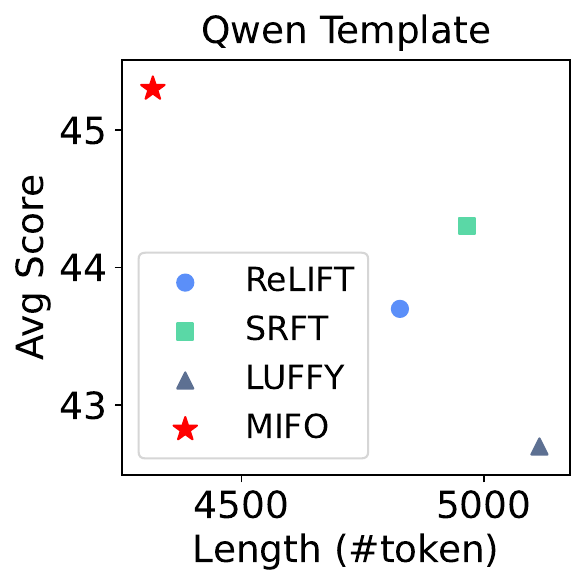}
  \end{subfigure}
  \begin{subfigure}[t]{0.46\linewidth}
    \centering
    \includegraphics[width=\linewidth]{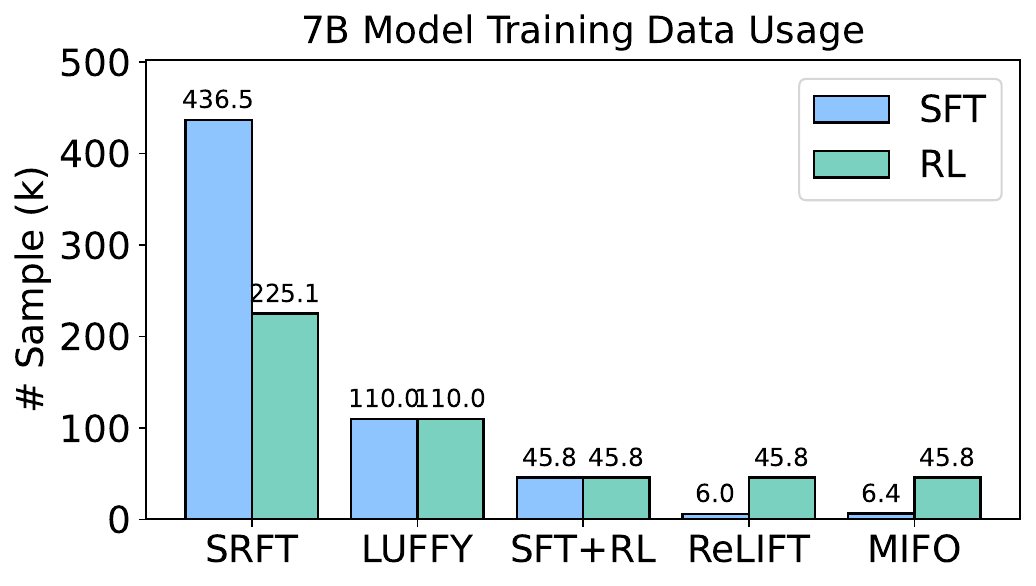}
  \end{subfigure}
  \caption{Average reasoning score vs. response lengths with no template (\textbf{left}) and Qwen template (\textbf{middle}) for 7B model; SFT and RL data usage for training 7B model (\textbf{right}).}
  \label{fig:template-data}
  \vspace{-0.2in}
\end{figure}

Table~\ref{tab:ablation-1.5b} reports an ablation on Qwen2.5-1.5B-Math model. We start from the base model and progressively add three components: interleaved SFT with RL (Interleave), entropy-based token selection (ES), and parameter freezing (PF) to mitigate SFT–RL interference.
\textbf{+ Interleave.}
Simply interleaving SFT with RL already boosts Overall accuracy from 15.0 to 34.0, though at the cost of longer outputs (4421 tokens).  
\textbf{+ Interleave + ES.}
Adding entropy selection improves to 39.2 overall and shortens responses to 3246 tokens, confirming that focusing on high-uncertainty tokens reduces redundancy.  
\textbf{+ Interleave + PF.}
Parameter freezing yields stable gains (Overall 38.4) and the best AIME-25 score (14.5), but still produces long traces (4490 tokens).  
\textbf{MIFO.}
Combining ES and PF achieves the best Overall accuracy \textbf{40.0} and the shortest outputs (2518). It delivers top scores on multiple test sets, showing that ES and PF are complementary in improving reasoning performance.

\subsection{Analyzing}
\label{sec:exp:analysis}

\paragraph{Templates Ablation.}

\begin{figure}[]
    \centering
    \includegraphics[width=1\linewidth]{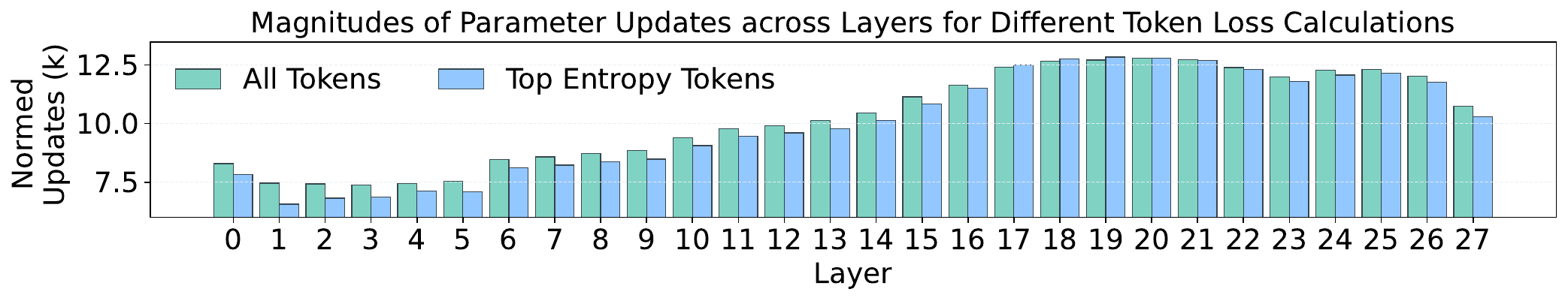}  
    \vspace{-0.3in}
    \caption{High-entropy token selection mitigates the magnitude of SFT updates for 1.5B model.}
    \label{fig:entropy-change-weights}
    \vspace{-0.2in}
\end{figure}

Besides using the LUFFY template~\citep{yan2025learning} for results in Section~\ref{sec:exp:main}, we evaluate MIFO under different reasoning templates to assess robustness in real-world deployments. Figure~\ref{fig:template-data} reports performance for MIFO and the strongest baselines with no template (left) and with the Qwen system template (middle). MIFO’s performance drops about 10 points when using simple templates, yet it still surpasses the baselines with best response efficiency. Detailed results by dataset and results with the Prime template are provided in Appendix~\ref{app:template}.

\paragraph{MIFO Mitigates the Forgetting.}
MIFO is motivated by the observation that SFT produces redundant parameter updates, whereas RL induces more parsimonious updates. We therefore reduce the SFT data size and limit updates by freezing parameters. Both components explicitly limit SFT updates, while whether computing the SFT loss only on high-entropy tokens decreases updates is uncertain.
To quantify the effect, we analyze parameter changes under pure SFT with two objectives: (i) loss on full-token and (ii) loss on high-entropy–only, and we plot layer-wise changes. Figure~\ref{fig:entropy-change-weights} shows that for most layers the high-entropy objective yields noticeably smaller weight updates than the full-token objective. Only layers 17–19 exhibit slightly larger changes under the high-entropy objective. Overall, high-entropy token selection mitigates SFT updates on magnitude. 
% \xiang{similar to Fig. 3, is Fig. 6 also about the magnitude? By the way, what does y-axis = 7500 mean? Similar question to Fig. 3.}

\begin{wrapfigure}{r}{0.5\textwidth}
\vspace{-0.3in}
  \centering
  \includegraphics[width=0.5\textwidth]{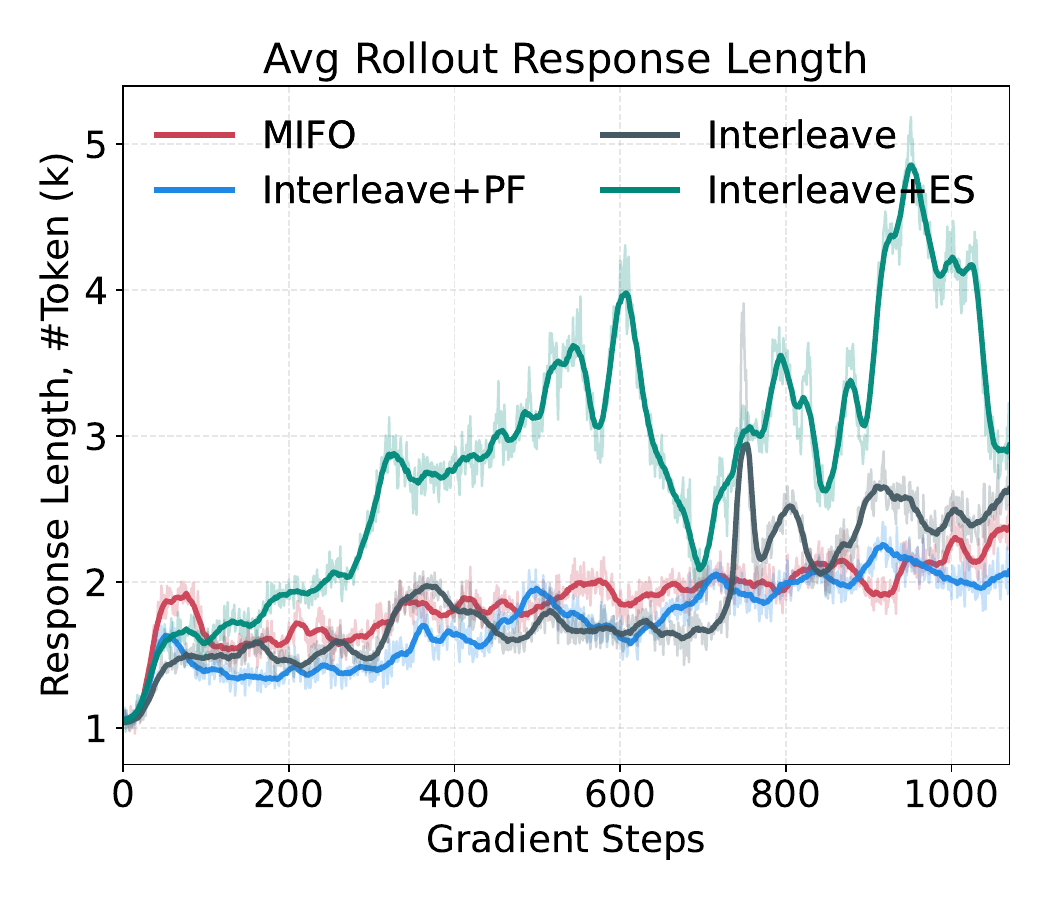}
  \vspace{-0.3in}
  \caption{Rollout response length at training.}

\label{fig:respons-len-abl}
\end{wrapfigure}

\paragraph{Parameter Freezing Shortens Response.} 
Mirroring Section~\ref{sec:ablation-study}, we conduct an ablation to identify which components improve response efficiency and present results in Figure~\ref{fig:respons-len-abl}. Comparing \textbf{Interleave} with \textbf{Interleave+ES}, we observe that adding ES increases rollout length. We hypothesize that emphasizing high-entropy tokens during SFT encourages exploration of alternative reasoning branches~\citep{wang2025beyond}, which expands the trace. Comparing \textbf{Interleave+PF} with \textbf{Interleave}, PF yields shorter responses in most training stages, especially in the final epoch (three in total). This suggests that reducing SFT updates also improves efficiency, since SFT data are typically longer due to being sourced from stronger models with longer CoT. Finally, comparing \textbf{MIFO} with \textbf{Interleave+ES}, we find that PF substantially reduces response length, demonstrating its effectiveness for efficiency.

In addition to the above experiments, we further provide the experimental results on hyperparameter analysis, training records for dynamic analysis in Appendix~\ref{app:exp}, and discussions in Appendix~\ref{app:discussion}.

\section{Conclusion}

We identify a key risk in combining SFT with RL for reasoning post-training: SFT applies much larger parameter updates that overwrite knowledge acquired by RL, leading to forgetting. We also observe a complementary pattern that informs a remedy: SFT tends to produce redundant updates, whereas RL induces more salient changes. Guided by these insights, we propose \textbf{MIFO}, a plug-and-play framework with two components: data processing and parameter freezing. MIFO is algorithm-agnostic, achieves state-of-the-art reasoning with reasoning efficiency, and requires significantly less training data compared to the previous SoTA method.

\section*{Ethics Statements}

Our work aims to improve the reasoning capability of LLMs on mathematical problems, which pose no safety risks. The study does not involve collecting sensitive data. All experiments use public training corpora and standard math reasoning benchmarks (AIME-24, AIME-25, AMC, Olympiad, MATH-500, MMLU-Pro). 

\section*{Reproducibility statement}

Implementation details are provided in Section~\ref{sec:expsetup} and Appendix~\ref{app:implementation} to facilitate reproducibility within the research community. We also provide the log information of the training dynamics and hyperparameter analysis in Appendix~\ref{app:exp} for reproducibility reference.

\bibliography{iclr2026_conference}
\bibliographystyle{iclr2026_conference}

\clearpage
\appendix

\section{The Use of Large Language Models (LLMs)}
We strictly adhere to the ICLR Code of Ethics and only utilize the Large Language Model to polish the paper for grammar and expression checks. 

\section{Limitation and Future Works}
\label{appendix: limitation_future_works}
Our study focuses on small (1.5B) and medium (7B) models due to limited computational resources. Scaling the method to larger models is a promising direction for future work. Our approach primarily targets full-parameter SFT; when applying parameter-efficient fine-tuning (PEFT), the freezing strategy may need to be adjusted to prevent catastrophic forgetting between SFT and RL. As stronger SFT and RL algorithms continue to emerge, we release this plug-and-play framework to encourage the community to explore improved post-training reasoning with advanced algorithm combinations.

\section{Theoretical Insights and Proof}
\label{app:proof}

Here we provide theoretical insights trying to explain why SFT exhibits greater redundancy in parameter updates than RL. The following discussion and proofs rest on simplifying assumptions intended to interpret our empirical findings, instead of providing a rigorous theoretical guarantee. Because large language models involve extremely high-dimensional parameters and massive training data, fully rigorous modeling is challenging.

\subsection{Redundancy Ratio Definition}
In this section, we introduce Lemma~\ref{thm:local-redundancy} to describe the \textit{decision–redundancy ratio} (DR) as a way to quantify how wasteful a parameter update is. For a given input and target token, we look at the strongest non-target competitor under the current parameters and measure the logit margin between the target and that competitor. Assuming the margin changes approximately linearly with small parameter moves and the same competitor remains active, there exists a unique, shortest move in parameter space that achieves any desired margin increase; this move points along the margin’s gradient. DR is then defined as the size of the actual parameter change divided by the size of the shortest move. A larger DR means the training trajectory moved farther than necessary to achieve the same decision margin, indicating more redundant updating.

\begin{lemma}
\label{thm:local-redundancy}
Fix a context $x$ and target token $y\in\{1,\dots,V\}$ with vocabulary size $V$.
Let $\theta\in\mathbb{R}^d$ be model parameters and $\theta_0$ a reference point (e.g., pre-update).
For logits $z_\theta(x)\in\mathbb{R}^V$ and probabilities $p_\theta=\mathrm{softmax}(z_\theta)$,
define the active competitor at $\theta_0$ as
$k^\star=\arg\max_{k\neq y} z_{\theta_0,k}(x)$.
The (logit) decision margin is
$m(\theta)=z_{\theta,y}(x)-z_{\theta,k^\star}(x)$, with $m_0=m(\theta_0)$, and
its parameter gradient is $g=\nabla_\theta m(\theta_0)\in\mathbb{R}^d$.
Assume (A1) local linearity $m(\theta_0+\Delta)\approx m_0+g^\top\Delta$ for
small $\Delta\in\mathbb{R}^d$ and (A2) competitor stability (the same $k^\star$ remains active locally).
Then for any desired margin $\varepsilon>0$, the minimal parameter displacement $\Delta^\star$ that achieves margin $\varepsilon$ is
\begin{equation*}
\Delta^\star=\arg\min_{\Delta}\{\|\Delta\|_2:\ g^\top\Delta\ge \varepsilon-m_0\}
=\frac{\varepsilon-m_0}{\|g\|_2^2}\,g,\qquad
\|\Delta^\star\|_2=\frac{\varepsilon-m_0}{\|g\|_2},
\end{equation*}
and for any training trajectory with net update $\Delta\theta=\theta_T-\theta_0$, we have decision–redundancy ratio
\begin{equation}
\label{eq:DR}
\mathrm{DR}_x \equiv \frac{\|\Delta\theta\|_2}{\|\Delta^\star\|_2}
=\|\Delta\theta\|_2\cdot\frac{\|g\|_2}{\varepsilon-m_0}.
\end{equation}

\end{lemma}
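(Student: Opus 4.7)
The plan is to recognize that, once assumptions (A1) and (A2) are in force, the statement reduces to a textbook least-norm problem with a single affine inequality constraint, so the proof is a short Cauchy--Schwarz (or KKT) argument followed by algebra. First I would use (A2), competitor stability, to assert that in a neighborhood of $\theta_0$ the function $m(\theta)$ depends on $\theta$ through the fixed pair $(y,k^\star)$, so that $g=\nabla_\theta m(\theta_0)$ is well defined and $k^\star$ does not switch as $\Delta$ varies; then (A1) replaces the nonlinear requirement $m(\theta_0+\Delta)\ge\varepsilon$ by the affine constraint $g^\top\Delta\ge \varepsilon-m_0$. This turns the minimization defining $\Delta^\star$ into
\begin{equation*}
\Delta^\star=\arg\min_{\Delta\in\mathbb{R}^d}\bigl\{\|\Delta\|_2:\ g^\top\Delta\ge \varepsilon-m_0\bigr\},
\end{equation*}
a convex program with nonempty feasible set (provided $g\neq 0$, which I will note as an implicit nondegeneracy condition).

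Second I would obtain the closed form in two complementary ways, stating one and remarking on the other. The direct route is Cauchy--Schwarz: any feasible $\Delta$ satisfies $\varepsilon-m_0\le g^\top\Delta\le\|g\|_2\,\|\Delta\|_2$, hence $\|\Delta\|_2\ge(\varepsilon-m_0)/\|g\|_2$, with equality iff $\Delta$ is a nonnegative scalar multiple of $g$. Writing $\Delta=tg$ with $t\ge 0$ and plugging into the constraint forces $t\ge(\varepsilon-m_0)/\|g\|_2^2$, so the minimizer is uniquely $\Delta^\star=\frac{\varepsilon-m_0}{\|g\|_2^2}\,g$ with norm $\|\Delta^\star\|_2=(\varepsilon-m_0)/\|g\|_2$. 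Alternatively, a one-line KKT computation on the Lagrangian $L(\Delta,\mu)=\tfrac12\|\Delta\|_2^2-\mu(g^\top\Delta-(\varepsilon-m_0))$ yields $\Delta^\star=\mu g$ and $\mu=(\varepsilon-m_0)/\|g\|_2^2$, matching the above.

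Finally, plugging $\|\Delta^\star\|_2$ into the defining ratio gives
\begin{equation*}
\mathrm{DR}_x=\frac{\|\Delta\theta\|_2}{\|\Delta^\star\|_2}=\|\Delta\theta\|_2\cdot\frac{\|g\|_2}{\varepsilon-m_0},
\end{equation*}
which is exactly \eqref{eq:DR}. The only subtleties I would flag are: (a) the lemma is meaningful precisely when $\varepsilon>m_0$, since otherwise the constraint is already satisfied at $\Delta=0$ and $\mathrm{DR}_x$ is ill-defined; (b) the nondegeneracy $g\neq 0$ is required for the minimal-norm direction to exist; and (c) the local approximation that underlies (A1) only justifies the formula for $\|\Delta\theta\|_2$ small enough that the linearization and the competitor-stability region are not violated. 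None of these is a genuine obstacle; they are standard caveats for local geometric statements of this form, and no other step in the proof is more than a one-line inequality.
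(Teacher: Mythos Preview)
Your proposal is correct and mirrors the paper's own proof almost exactly: both reduce the problem via (A1)--(A2) to a minimum-norm problem under the affine constraint $g^\top\Delta\ge\varepsilon-m_0$, apply Cauchy--Schwarz to obtain the lower bound $\|\Delta\|_2\ge(\varepsilon-m_0)/\|g\|_2$ with equality along $g$, and then substitute into the definition of $\mathrm{DR}_x$. Your flagged caveats ($\varepsilon>m_0$, $g\neq0$, locality) match the paper's handling of the $\beta\le0$ case and its nondegeneracy assumption; the KKT aside is an extra remark not present in the paper but entirely compatible with it.
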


\begin{proof}
Let $\beta \coloneqq \varepsilon - m_0$. Under (A1) local linearity and (A2) competitor stability, achieving the (linearized) target margin $\varepsilon$ is equivalent to requiring
\[
g^\top \Delta \;\ge\; \beta,
\]
where $g=\nabla_\theta m(\theta_0)$. Hence the minimal-displacement problem is
\[
\Delta^\star=\min_{\Delta\in\mathbb{R}^d}\ \|\Delta\|_2
\quad \text{s.t.}\quad
g^\top \Delta \ge \beta .
\]

If $\beta\le 0$, then $\Delta^\star=0$ is feasible and optimal (the target margin is already met), and the result is trivial. In the remainder, assume $\beta>0$ and $g\neq 0$.

For any feasible $\Delta$, by Cauchy–Schwarz,
\[
\beta \;\le\; g^\top \Delta \;\le\; \|g\|_2\,\|\Delta\|_2
\quad\Longrightarrow\quad
\|\Delta\|_2 \;\ge\; \frac{\beta}{\|g\|_2}.
\]
Equality holds if and only if $\Delta$ is colinear and aligned with $g$, i.e., $\Delta=\alpha g$ with $\alpha\ge 0$. Choosing
\[
\alpha \;=\; \frac{\beta}{\|g\|_2^2}
\quad\Rightarrow\quad
\Delta^\star \;=\; \frac{\beta}{\|g\|_2^2}\,g,
\qquad
\|\Delta^\star\|_2 \;=\; \frac{\beta}{\|g\|_2}.
\]
Thus $\Delta^\star$ achieves the lower bound and is the unique minimizer (any deviation from the $g$ direction increases the norm for fixed inner product $g^\top \Delta=\beta$).

Now consider any training trajectory with net update $\Delta\theta=\theta_T-\theta_0$ that attains the target margin in the linearized model, so $g^\top \Delta\theta \ge \beta$. By optimality of $\Delta^\star$, $\|\Delta\theta\|_2 \ge \|\Delta^\star\|_2$. Therefore, for $\beta>0$,
\[
\mathrm{DR}_x
\;\equiv\;
\frac{\|\Delta\theta\|_2}{\|\Delta^\star\|_2}
\;=\;
\|\Delta\theta\|_2 \cdot \frac{\|g\|_2}{\varepsilon-m_0}.
\]
(When $\beta\le 0$, we have $\Delta^\star=0$ and the margin is already satisfied.)
\end{proof}

% \paragraph{Remark (Applicability to LLMs and local linearization).}
% Assume each logit $z_{\theta,k}(x)$ is twice differentiable and
% there exists $\rho>0$ and radius $r>0$ such that
% $\|\nabla_\theta^2 z_{\theta,k}(x)\|\le \rho$ for all $k$ and all
% $\theta$ in the ball $B_r(\theta_0)$.
% Then the logit margin $m(\theta)=z_{\theta,y}(x)-z_{\theta,k^\star}(x)$
% satisfies the quadratic remainder bound
% \[
% m(\theta_0+\Delta)\;\ge\; m_0 + g^\top\Delta - \tfrac{\rho}{2}\|\Delta\|_2^2
% \quad \text{for all }\Delta\text{ with }\|\Delta\|_2\le r.
% \]
% Moreover, suppose the runner-up gap at $\theta_0$ is
% $\kappa := z_{\theta_0,k^\star}(x)-\max_{j\notin\{y,k^\star\}} z_{\theta_0,j}(x)>0,$
% and let $B:=\max_{j\notin\{y,k^\star\}}\|\nabla_\theta(z_{\theta,k^\star}(x)-z_{\theta,j}(x))\|$.
% If $\|\Delta\|_2 \le \kappa/(2B)$, then the same $k^\star$ remains the active competitor (competitor stability).
% Hence, for any target increment $\beta:=\varepsilon-m_0$ with
% \[
% 0<\beta \le \|g\|_2\,r - \tfrac{\rho}{2}r^2
% \quad\text{and}\quad
% \frac{\beta}{\|g\|_2} \le \frac{\kappa}{2B},
% \]
% the linearized minimizer $\Delta^\star = \frac{\beta}{\|g\|_2^2}\,g$
% is feasible (stays in $B_r(\theta_0)$ with the same $k^\star$) and achieves
% $m(\theta_0+\Delta^\star)\ge \varepsilon$.

\subsection{Redundancy Analysis of SFT and RL}
Based on the decision–redundancy ratio (DR) defined in the previous section, we now analyze the properties of SFT and RL based on their gradient. We first borrow the policy gradient form of SFT from~\cite{wu2025generalization}, and then compare their decision–redundancy ratios.
\paragraph{SFT Gradient.}
Let $\mathcal D=\{(x,y^*)\}$ denote a corpus of annotated demonstrations, where $y^*$ is the reference response for query $x$.
SFT minimizes the cross-entropy loss:
\[
\mathcal L_{\mathrm{SFT}}(\theta)
= \E_{(x,y^*)\sim\mathcal D}\!\left[-\log \pi_\theta(y^*\mid x)\right],
\]
and its gradient is:
\begin{equation}
\label{eq:sft_gradient}
\nabla_\theta \mathcal L_{\mathrm{SFT}}(\theta)
= \E_{(x,y^*)\sim\mathcal D}\!\left[-\,\nabla_\theta \log \pi_\theta(y^*\mid x)\right].
\end{equation}

\paragraph{RL Gradient.}
Let $y$ denote a rollout response sampled from the policy model $\pi_\theta(\cdot\mid x)$ for query $x$.
Given a reward function $r(x,y)\in\mathbb R$, the policy objective is
\[
J(\theta)=\E_{x\sim\mathcal D_x,\; y\sim\pi_\theta(\cdot\mid x)}\!\left[r(x,y)\right],
\]
and its policy gradient is
\begin{equation}
\label{eq:rl_gradient}
\nabla_\theta J(\theta)
= \E_{x\sim\mathcal D_x,\; y\sim\pi_\theta(\cdot\mid x)}\!\left[\nabla_\theta \log \pi_\theta(y\mid x)\; r(x,y)\right].
\end{equation}

\textbf{Rewriting SFT Gradient as Policy Gradient.}
Here we directly quote the conclusion from~\cite{wu2025generalization} to rewrite SFT gradient to policy gradient. The SFT gradient in Equation~\ref{eq:sft_gradient} taken under the fixed demonstration distribution is convert it to an on-policy expectation by inserting an importance weight that compares the expert (Dirac Delta) distribution with the model distribution:
\begin{equation}
\label{eq:sft_policy_gradient}
\E_{(x,y^*)\sim\mathcal D}\!\left[-\,\nabla_\theta \log \pi_\theta(y^*\mid x)\right]
= \E_{x\sim\mathcal D_x}\;\underbrace{\E_{y\sim\pi_\theta(\cdot\mid x)}\!\left[
\frac{\mathbb{I}[y=y^*]}{\pi_\theta(y\mid x)}\;\big(-\nabla_\theta \log \pi_\theta(y\mid x)\big)\right]}_{\text{resample + reweight}}.
\end{equation}

Define the auxiliary variables
\[
w(y\mid x)=\frac{1}{\pi_\theta(y\mid x)}, \qquad r(x,y)=\mathbb{I}[y=y^*].
\]
Reorganize Equation~\ref{eq:sft_policy_gradient} and rewrite it using the above auxiliary variables; we obtain
\begin{equation}
\nabla_\theta \mathcal L_{\mathrm{SFT}}(\theta)
= -\,\E_{x\sim\mathcal D_x,\; y\sim\pi_\theta(\cdot\mid x)}\!\Big[
w(y\mid x)\;\nabla_\theta \log \pi_\theta(y\mid x)\; r(x,y)
\Big].
\end{equation}

\noindent
This form of the SFT gradient now closely aligns with the policy gradient in Equation~\ref{eq:rl_gradient}:
Conventional SFT can be viewed as an on-policy gradient where the reward is an indicator of matching the expert response, modulated by an importance weight $1/\pi_\theta$.

Then, we will utilize obtained conclusions to analyze DR of SFT and RL. 
\begin{theorem}
\label{thm:sft-more-redundant}
Define the reward function as $r$ for the policy model $\pi_\theta$. For a input context $x$, policy model output token $y_t$ according to $x$ and target token $y^*_t$, if $\mathbb{I}[y_t=y^*_t]\;\ge\;r\pi_\theta(y_t\mid x)$, SFT (policy model expression) and RL predict $y^*$ that both attain the same target margin $\varepsilon$  satisfy
\[
\mathrm{DR}_x^{\mathrm{SFT}}
\;\ge\;
\mathrm{DR}_x^{\mathrm{RL}}.
\]
\end{theorem}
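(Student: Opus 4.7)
The plan is to invoke Lemma~\ref{thm:local-redundancy} separately for the SFT and RL updates, cancel every factor that depends only on the model (and not on the training signal), and reduce the claim to a scalar inequality that is exactly the theorem's hypothesis.

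First, because the margin function $m(\theta) = z_{\theta,y}(x) - z_{\theta,k^{\star}}(x)$ and its gradient $g = \nabla_\theta m(\theta_0)$ are intrinsic to $\pi_{\theta_0}$ and do not depend on the loss that produced the update, and because both updates are assumed to reach the same target margin $\varepsilon$ from the same $m_0$, Lemma~\ref{thm:local-redundancy} yields
\[
\mathrm{DR}_x^{\mathrm{SFT}}
= \|\Delta\theta_{\mathrm{SFT}}\|_2\cdot\frac{\|g\|_2}{\varepsilon-m_0},
\qquad
\mathrm{DR}_x^{\mathrm{RL}}
= \|\Delta\theta_{\mathrm{RL}}\|_2\cdot\frac{\|g\|_2}{\varepsilon-m_0}.
\]
Both ratios share the positive prefactor $\|g\|_2/(\varepsilon-m_0)$, so the claim reduces to showing the update-norm inequality $\|\Delta\theta_{\mathrm{SFT}}\|_2 \ge \|\Delta\theta_{\mathrm{RL}}\|_2$.

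Next I would match step sizes by writing $\Delta\theta_{\mathrm{SFT}} = -\eta\,\nabla_\theta\mathcal{L}_{\mathrm{SFT}}(\theta_0)$ and $\Delta\theta_{\mathrm{RL}} = \eta\,\nabla_\theta J(\theta_0)$ for a common $\eta>0$. Applying the on-policy rewriting of SFT in \eqref{eq:sft_policy_gradient} and the policy gradient \eqref{eq:rl_gradient}, the per-token summand at $y_t$ factors as $[\mathbb{I}[y_t=y^{*}_t]/\pi_\theta(y_t\mid x)]\cdot\nabla_\theta\log\pi_\theta(y_t\mid x)$ for SFT and $r(x,y_t)\cdot\nabla_\theta\log\pi_\theta(y_t\mid x)$ for RL. Dividing the hypothesis $\mathbb{I}[y_t=y^{*}_t]\ge r\,\pi_\theta(y_t\mid x)$ by the positive quantity $\pi_\theta(y_t\mid x)$ gives
\[
\frac{\mathbb{I}[y_t=y^{*}_t]}{\pi_\theta(y_t\mid x)} \;\ge\; r,
\]
so the SFT scalar prefactor dominates the RL scalar prefactor. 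Because the two summands share the common direction $\nabla_\theta\log\pi_\theta(y_t\mid x)$ in parameter space, this pointwise dominance lifts to $\|\Delta\theta_{\mathrm{SFT}}\|_2 \ge \|\Delta\theta_{\mathrm{RL}}\|_2$, and combining with the first paragraph closes the argument.

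The main obstacle I anticipate is precisely this lift from a scalar prefactor inequality to a vector-norm inequality on the updates. At the single-token resolution endorsed by the hypothesis, both per-step contributions are collinear with $\nabla_\theta\log\pi_\theta(y_t\mid x)$, so the argument is essentially scalar. If one instead works with the full expected gradients aggregated over tokens and trajectories, the per-summand directions need not coincide, and lifting requires an additional hypothesis such as co-directionality of the contributing summands or, alternatively, replacing Euclidean norms with an inner product against $g$ so that only margin-aligned components of the update enter the comparison. My proposal is to present the clean token-level argument as the core of the proof and treat the aggregate case as a corollary under a co-directionality assumption, consistent with the single-token form in which the theorem's hypothesis is stated.
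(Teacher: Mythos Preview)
Your proposal is correct and follows essentially the same route as the paper: apply Lemma~\ref{thm:local-redundancy} to both updates, cancel the shared factor $\|g\|_2/(\varepsilon-m_0)$ and the learning rate $\eta$ to reduce to a gradient-norm comparison, then use the token-level collinearity of $\nabla_\theta\log\pi_\theta(y_t\mid x)$ together with the scalar prefactor inequality derived from the hypothesis. The paper's proof is terser---it writes the ratio $\mathrm{DR}_x^{\mathrm{SFT}}/\mathrm{DR}_x^{\mathrm{RL}}=\mathbb{I}[y_t=y_t^*]/(r\,\pi_\theta(y_t\mid x))$ directly---and your explicit discussion of the collinearity requirement and the aggregation caveat is in fact more careful than the paper, which simply notes a ``slight change from the sentence level expression to token level expression.''
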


\begin{proof}
According to Equation~\ref{eq:DR} in Lemma~\ref{thm:local-redundancy}, we have the following DR for SFT and RL: 
\begin{equation*}
\mathrm{DR}_x^{SFT} \equiv \frac{\|\Delta\theta^{SFT}\|_2}{\|\Delta^\star\|_2}
=\|\Delta\theta^{SFT}\|_2\cdot\frac{\|g\|_2}{\varepsilon-m_0}
=\|\eta \nabla_\theta \mathcal L_{\mathrm{SFT}}(\theta)\|_2,
\end{equation*}
\begin{equation*}
\mathrm{DR}_x^{RL} \equiv \frac{\|\Delta\theta^{RL}\|_2}{\|\Delta^\star\|_2}
=\|\Delta\theta^{RL}\|_2\cdot\frac{\|g\|_2}{\varepsilon-m_0}
=\|\eta \nabla_\theta J(\theta)\|_2.
\end{equation*}

Then we utilize the sentence level policy gradient expressions in Equation~\ref{eq:sft_policy_gradient} and Equation~\ref{eq:rl_gradient}. Here we make slight change from the sentence level expression to token level expression here to simplify the illustration and have:
\begin{equation}
\frac{\mathrm{DR}_x^{SFT}}{\mathrm{DR}_x^{RL}}
=\frac{\|\nabla_\theta \mathcal L_{\mathrm{SFT}}\|_2}{\| \nabla_\theta J(\theta)\|_2}
= \frac{\mathbb{I}[y_t=y^*_t]}{r\pi_\theta(y_t\mid x)}
\;\ge\; 1
\end{equation}

So we have 
\[
\mathrm{DR}_x^{\mathrm{SFT}}
\;\ge\;
\mathrm{DR}_x^{\mathrm{RL}}.
\]

\end{proof}

Theorem~\ref{thm:sft-more-redundant} provides the condition when $\mathrm{DR}_x^{SFT}$ will larger then $\mathrm{DR}_x^{RL}$. Since when SFT updates, the gradient in Equation~\ref{eq:sft_policy_gradient} can't be zero, therefore $\mathbb{I}[y_t=y^*_t] = 1$. And for hard reasoning case, the policy probability $\pi_\theta(y_t\mid x) \ll 1$ while reward function $r$ is usually bounded (PPO/GRPO), therefore $\mathbb{I}[y_t=y^*_t]\;\ge\;r\pi_\theta(y_t\mid x)$, and SFT has more updating redundancy compared to RL.

\section{Implementation Details}
\label{app:implementation}

\begin{algorithm}
\caption{\textbf{MIFO}: MItigating FOrgetting between SFT and RL}
\label{alg:mifo}
\begin{algorithmic}
\Require Policy $\pi_\theta$, old policy $\pi_{\theta_{\text{old}}}$, reward $R$, entropy ratio $\rho$, decay $\alpha$, Buffer size $S$, SFT data D.
\State Initialize $\text{Buffer}_{\text{FT}} \gets \emptyset$, $C_0^{j} = 0$ for all $j$, Define $\text{SFT} \to \text{RL}$ iteration $i$ as $\text{Interval}^{RL \to SFT}_{i}$

\For{each $\text{Interval}^{RL \to SFT}_{i}$ interval iteration $i$}
    \State Record $\theta^{\text{j}}_{RL,start,i}$
    \For{each question $q$}
        \State $\pi_{\theta_{\text{old}}}(q)$ rollout, compute $R(o_i)$ and advantages $A_i$. Update $\pi_\theta$ via GRPO 
        \If{$\mathrm{acc}(q) < a$ \textbf{and} $\mathrm{extract}(D(q)) = a$}
            \State Add $(q, D(q))$ to $\text{Buffer}_{\text{FT}}$
        \EndIf
    \EndFor
    \State Record $\boldsymbol{\theta}^{\text{j}}_{RL,end,i}$
    \State $\Delta\boldsymbol{\theta}_{\mathrm{RL},i}^j
\gets
\left\| \theta_{\mathrm{RL,end}, i}^j - \theta_{\mathrm{RL,start}, i}^j \right\|_2$, $\boldsymbol{\tilde{C}}_i \;=\; \alpha\, \boldsymbol{C}_{i-1} \;+\; (1-\alpha)\,\Delta\boldsymbol{\theta}_{\mathrm{RL},i}$

    \State Obtain the mask $\mathbf{M}_i^{j}$: $\mathbf{M}_i^{j}=\mathbf{1}[\,j\in\mathcal{I}_i\,]$; 
    $\mathbf{M}_i^{j}=\mathbf{1}[\,j\in\mathcal{I}_i\,], 
\mathbf{C}_i \;=\; \mathbf{M}_i \odot \tilde{\mathbf{C}}_i,$ 
    % \If{$|\text{Buffer}_{\text{FT}}| \geq M$}
    \State Freeze Selected parameters $\nabla_{\boldsymbol{\theta}_i}\mathcal{L}_{\mathrm{SFT}}
\;\leftarrow\;
(\mathbf{1}-\mathbf{M}_i)\odot \nabla_{\boldsymbol{\theta}_i}\mathcal{L}_{\mathrm{SFT}}.$.

\State SFT only calculate loss for tokens with $H_t \geq \tau_\rho^q$ in $\text{Buffer}_{\text{FT}}$, and unfreeze.
    % \EndIf
\EndFor
\end{algorithmic}
\end{algorithm}

\paragraph{Method Implementation}
We implement RL and SFT following widely used settings. For RL, we remove the KL penalty, length normalization, and standard-error normalization, as in Dr.GRPO~\citep{liu2025understanding} and LUFFY~\citep{yan2025learning}. GRPO hyperparameters are: rollout batch size \(128\), number of rollouts per query \(8\), update batch size \(64\), learning rate \(1\times10^{-6}\), rollout temperature \(1.0\), and entropy coefficient \(0.001\).
For SFT, the training batch size is \(128\) and the learning rate is \(1\times10^{-5}\). For MIFO, the rollout-accuracy threshold for buffering is \(p=\tfrac{1}{8}\), the high-entropy token ratio is \(\rho=0.2\), the TopK freezing fraction is \(k=0.5\), and the decay coefficient is \(\alpha=0.5\). Pseudocode of MIFO is provided in Algorithm~\ref{alg:mifo}.

\paragraph{Training}
Following prior work~\citep{yan2025learning, ma2025learning, fu2025srft, cui2025process, liu2025understanding, hu2025open}, we train on Qwen2.5-Math (1.5B and 7B)~\citep{yang2024qwen2}. As in~\citep{fu2025srft, yan2025learning, ma2025learning}, we increase the RoPE base from \(10{,}000\) to \(40{,}000\) and extend the context window to \(16{,}384\) tokens to encourage exploration. Models are trained for three epochs on \(4\times 8\) A100 GPUs using the VERL framework~\citep{sheng2025hybridflow}. The reward is binary:
\[
R =
\begin{cases}
1, & \text{if the answer is correct},\\
0, & \text{otherwise}.
\end{cases}
\]

\paragraph{Evaluation}
Except for Oat-Zero and OpenReasoner-7B, we independently evaluate publicly released baseline checkpoints to reproduce and verify prior results. The remaining baseline scores are taken from~\citep{ma2025learning}, which adopts the same evaluation protocol as MIFO. All evaluations are conducted with vLLM~\citep{sheng2025hybridflow}, using a temperature of \(0.6\) and a maximum sequence length of \(8{,}192\) tokens. Additional dataset statistics are provided in Table~\ref{tab:datasets-statistics}.

\begin{table}[]
\centering
\caption{The statistics of evaluation datasets}
\begin{tabular}{lcccc}
\toprule
Dataset  & \#Test & Task Type                & Domain            & \#k \\ \midrule
AIME24   & 30     & Math competition         & Mathematics       & 32  \\
AIME25   & 30     & Math competition         & Mathematics       & 32  \\
AMC      & 83     & Math competition         & Mathematics       & 32  \\
MATH-500 & 500    & Mathematical reasoning   & Mathematics       & 1   \\
Olympiad & 674    & Math competition         & Mathematics       & 1   \\
MMLU-Pro & 12,102 & Multi-task understanding & Multidisciplinary & 1   \\ \bottomrule
\end{tabular}
\label{tab:datasets-statistics}
\end{table}

\paragraph{Chat Template}
For our main experiments, we follow~\citep{ma2025learning, yan2025learning, fu2025srft} and adopt the LUFFY chat template across all training paradigms. This unified system prompt encourages systematic reasoning—analyze, summarize, explore, reassess, and refine—as illustrated in Figure~\ref{fig:chat-template}. We also evaluate with other popular templates shown in Figure~\ref{fig:chat-template} to assess robustness under template shifts.

\paragraph{Data Usage for Training} We report training data usage for the 7B model in Figure~\ref{fig:template-data} (right), and we provide additional details for the 1.5B model in Figure~\ref{fig:data-efficiency-1.5}. Because the 1.5B model has weaker reasoning capability, its lower rollout accuracy leads to a slight increase in SFT data consumption. Baseline data usage is compiled from the corresponding papers and from dataset statistics on Hugging Face. For the LUFFY baseline, we use the \(\text{LUFFY}^\dag\) variant~\citep{yan2025learning}, which has stronger reasoning performance, to reproduce results and record the associated data usage.

\begin{figure}[]
    \centering
    \includegraphics[width=0.5\linewidth]{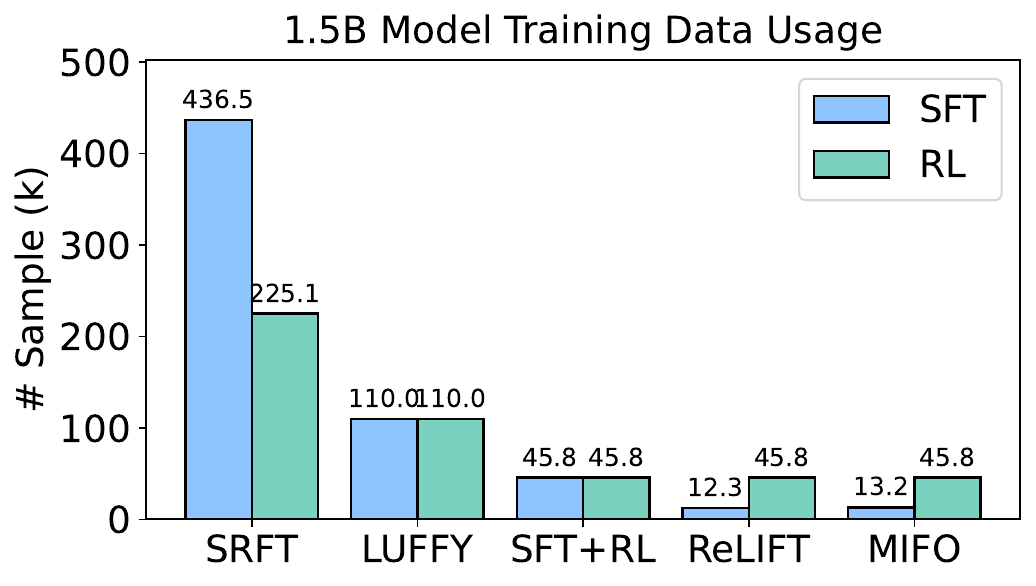}  
    \caption{Data usage for training Qwen2.5-1.5B-Math model}
    \label{fig:data-efficiency-1.5}
\end{figure}

\begin{figure}[t]
    \label{fig:chat-template}
    \centering
    \includegraphics[width=0.9\linewidth]{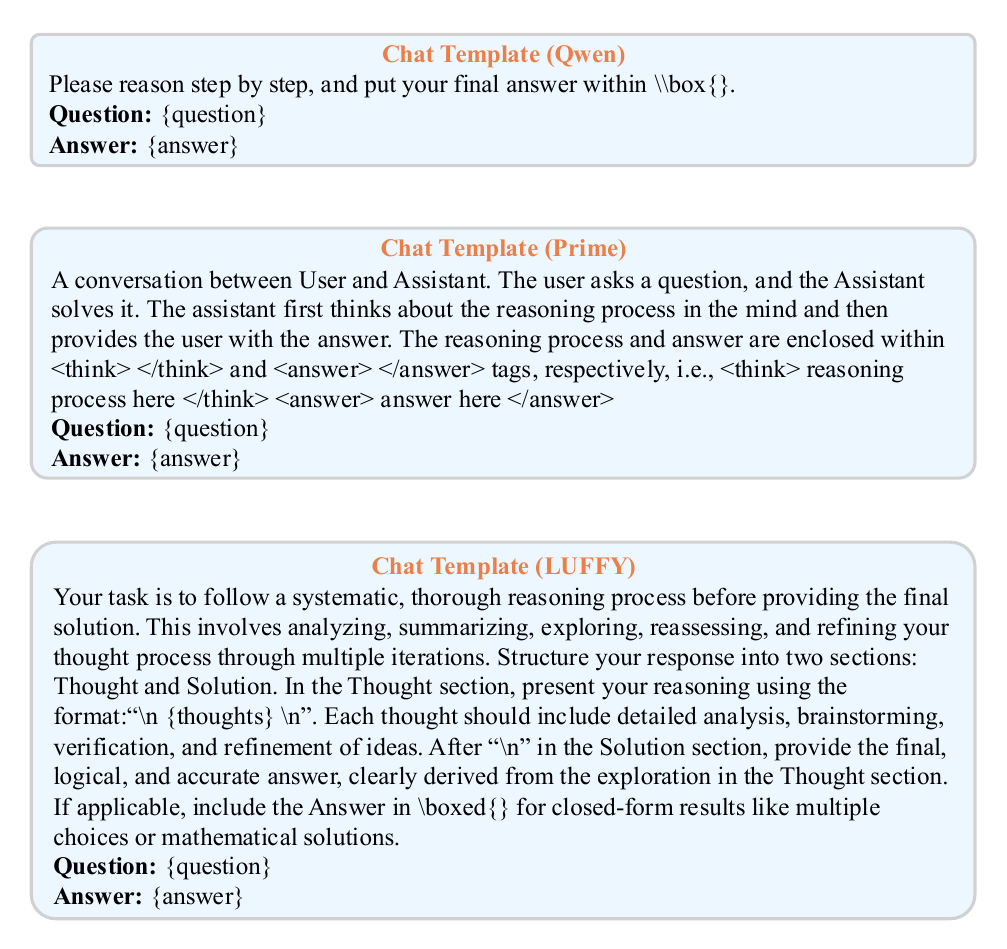}  
    \caption{Chat template details.}
    \label{fig:chat-template}
\end{figure}

\section{Additional Experiments}
\label{app:exp}
\subsection{Hyperparameter Analysis}
Our method uses four hyperparameters: the rollout-accuracy threshold \(p\) for buffering questions, the high-entropy token ratio \(\rho\), the TopK freezing fraction \(k\), and the history coefficient \(\alpha\). As shown in Figures~\ref{fig:abl1}, these hyperparameters are insensitive and easy to tune. In Figure~\ref{fig:abl1} (left), the model performs best at \(p=\tfrac{2}{8}\). Since performance is comparable across nearby settings, we choose \(p=\tfrac{1}{8}\) to reduce SFT data usage. Since the limit of computational resources, this analysis is conducted for only 1 epoch of training.

\subsection{Template Ablation Study}
\label{app:template}
In real-world deployment, users may not provide carefully designed chat prompts for LLM reasoning. It is therefore important to assess robustness in the wild. We evaluate MIFO under multiple reasoning templates to test whether it maintains strong performance without curated prompts. In Section~\ref{sec:exp:analysis}, we reported average reasoning accuracy and response length under the \textit{No template} and \textit{Qwen}~\citep{yang2024qwen2} settings. Here we add results with another popular template, \textit{Prime}~\citep{cui2025process}, which is commonly used for RL-trained reasoning models, and we provide per-dataset scores in Table~\ref{tab:template}. Table~\ref{tab:template} further indicates that our method is template-robust, outperforming competitive baselines across all three templates.

\begin{figure}[t]
  \centering
  \begin{subfigure}[t]{0.24\linewidth}
    \centering
    \includegraphics[width=\linewidth]{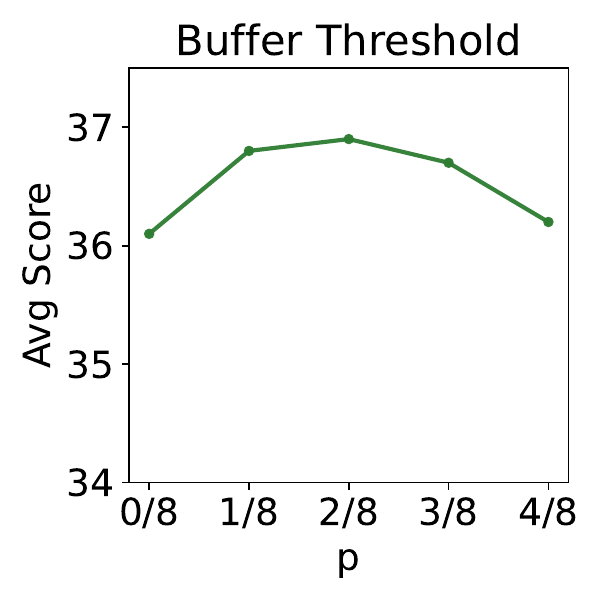}
  \end{subfigure}\hfill
  \begin{subfigure}[t]{0.24\linewidth}
    \centering
    \includegraphics[width=\linewidth]{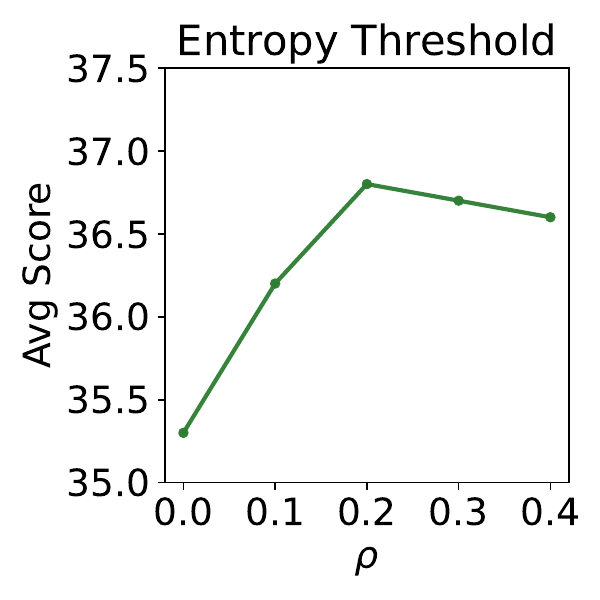}
  \end{subfigure}\hfill
  \begin{subfigure}[t]{0.24\linewidth}
    \centering
    \includegraphics[width=\linewidth]{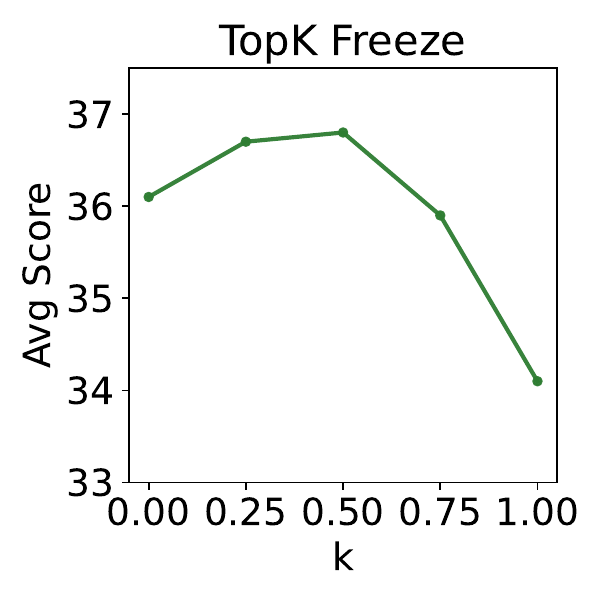}
  \end{subfigure}\hfill
  \begin{subfigure}[t]{0.24\linewidth}
    \centering
    \includegraphics[width=\linewidth]{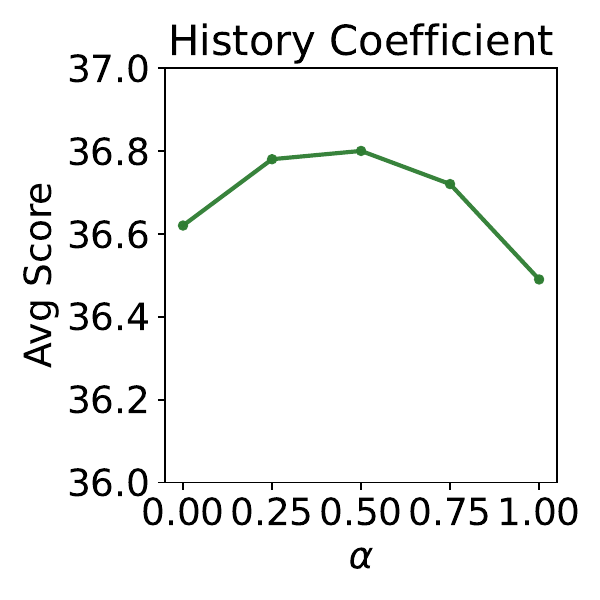}
  \end{subfigure}
  \caption{Hyperparameter analysis.}
  \label{fig:abl1}
\end{figure}

\begin{table}[]
\centering
\caption{Performance comparison on Qwen-Math-2.5 with different templates. 
\textbf{Bold} and \uline{underline} indicate the best and second-best results, respectively.}
\resizebox{\textwidth}{!}{%
\begin{tabular}{llccccccc}
\toprule
\cellcolor[HTML]{EFEFEF}\textbf{Templates}                                       & \textbf{Model} & \textbf{AIME-24} & \textbf{AIME-25} & \textbf{AMC}  & \textbf{MATH-500} & \textbf{Olympiad} & \textbf{MMLU-Pro} & \textbf{Average} \\ \midrule
\cellcolor[HTML]{EFEFEF}                                 & ReLIFT         & 23.3             & 14.1             & 54.2          & 75.0              & 41.8              & 16.4              & 37.6             \\
\cellcolor[HTML]{EFEFEF}                                 & LUFFY          & 25.4             & 14.1             & 55.9          & 75.4              & 41.0              & 16.3              & 38.0             \\
\cellcolor[HTML]{EFEFEF}                                 & SRFT           & \textbf{27.7}    & \uline{14.3}     & \textbf{59.2} & \uline{77.8}      & \uline{42.9}      & \uline{21.7}      & \uline{40.6}     \\
\multirow{-4}{*}{\cellcolor[HTML]{EFEFEF}No}    & MIFO           & \uline{26.1}     & \textbf{17.7}    & \uline{58.7}  & \textbf{79.6}     & \textbf{43.4}     & \textbf{22.9}     & \textbf{41.4}    \\ \midrule
\cellcolor[HTML]{EFEFEF}                                 & ReLIFT         & 25.3             & 14.7             & \uline{59.5}  & 76.6              & 40.7              & 45.1              & 43.7             \\
\cellcolor[HTML]{EFEFEF}                                 & LUFFY          & 25.3             & 13.0             & 56.6          & 76.4              & 40.2              & 44.8              & 42.7             \\
\cellcolor[HTML]{EFEFEF}                                 & SRFT           & \uline{27.3}     & \uline{15.2}     & 58.9          & \uline{77.0}      & \uline{42.1}      & \uline{45.3}      & \uline{44.3}     \\
\multirow{-4}{*}{\cellcolor[HTML]{EFEFEF}Qwen}  & MIFO           & \textbf{28.2}    & 16.3             & \textbf{61.9} & \textbf{77.4}     & \textbf{42.4}     & \textbf{45.7}     & \textbf{45.3}    \\ \midrule
\cellcolor[HTML]{EFEFEF}                                 & ReLIFT         & 27.2             & \textbf{21.9}    & 63.4          & 86.0              & \textbf{52.1}     & 49.2              & 50.0             \\
\cellcolor[HTML]{EFEFEF}                                 & LUFFY          & \uline{27.4}     & 17.0             & 61.2          & 84.4              & 46.1              & 47.8              & 47.3             \\
\cellcolor[HTML]{EFEFEF}                                 & SRFT           & 26.7             & \uline{20.1}     & \uline{68.1}  & \uline{86.2}      & 49.3              & \uline{49.9}      & \uline{50.1}     \\
\multirow{-4}{*}{\cellcolor[HTML]{EFEFEF}Prime} & MIFO           & \textbf{28.5}    & \textbf{21.9}    & \textbf{68.3} & \textbf{87.4}     & \uline{50.2}      & \textbf{50.0}     & \textbf{51.1}    \\ \bottomrule
\end{tabular}}
\label{tab:template}
\end{table}

\subsection{Training Dynamics}
\begin{figure}[]
  \centering
  \begin{subfigure}[t]{0.5\linewidth}
    \centering
    \includegraphics[width=\linewidth]{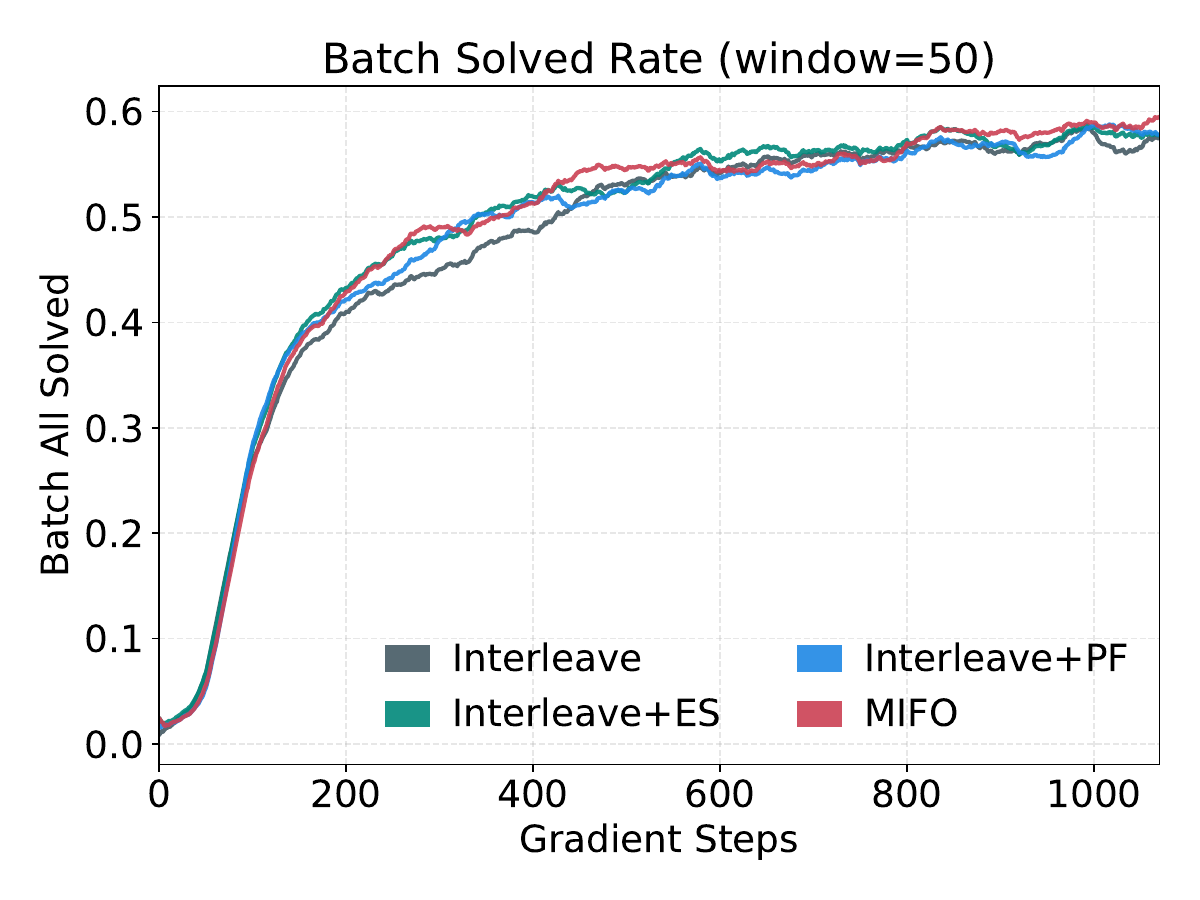} % 可加 [page=1]
  \end{subfigure}\hfill
  \begin{subfigure}[t]{0.5\linewidth}
    \centering
    \includegraphics[width=\linewidth]{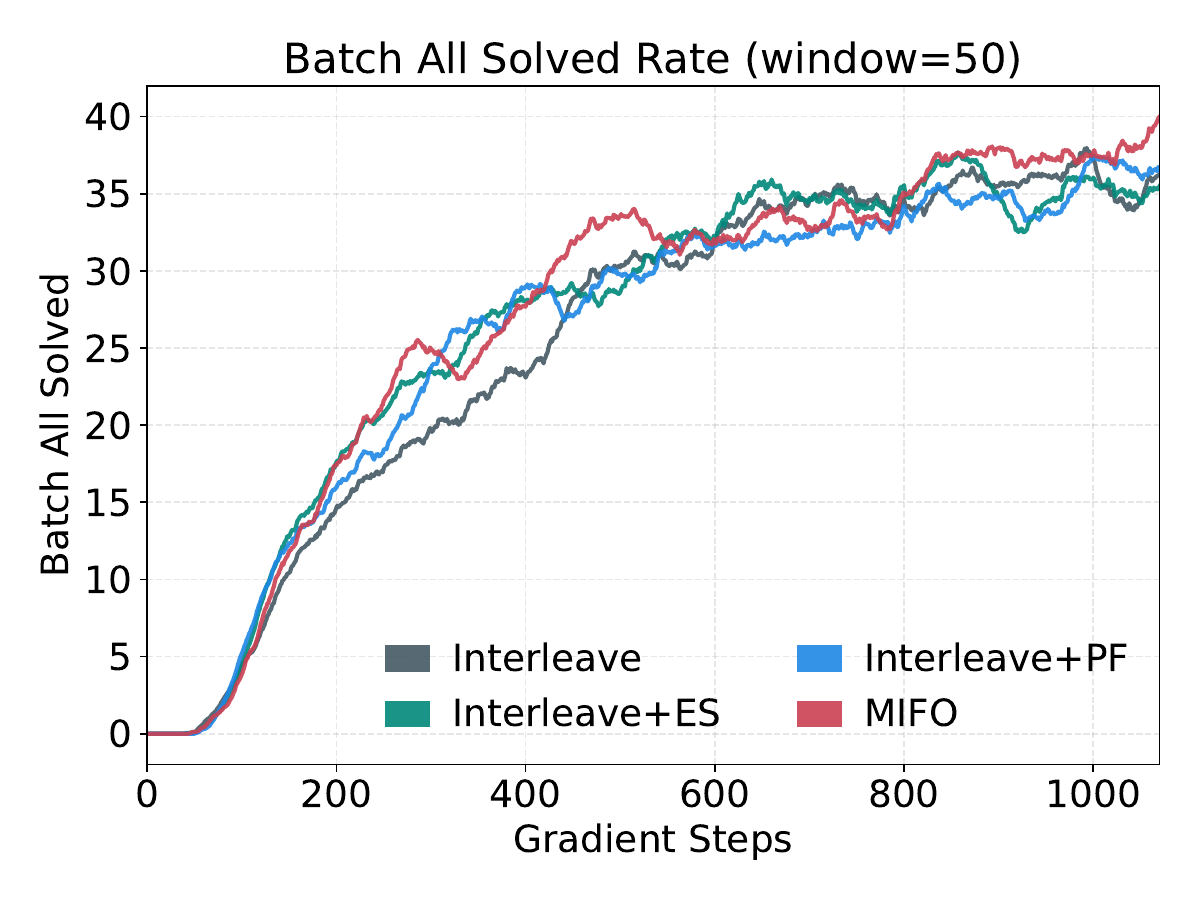}
  \end{subfigure}
  \caption{Training dynamics for ablated models on batch solving with running average.}
  \label{fig:batch-solve}
\end{figure}

\begin{figure}[]
  \centering
  \begin{subfigure}[t]{0.5\linewidth}
    \centering
    \includegraphics[width=\linewidth]{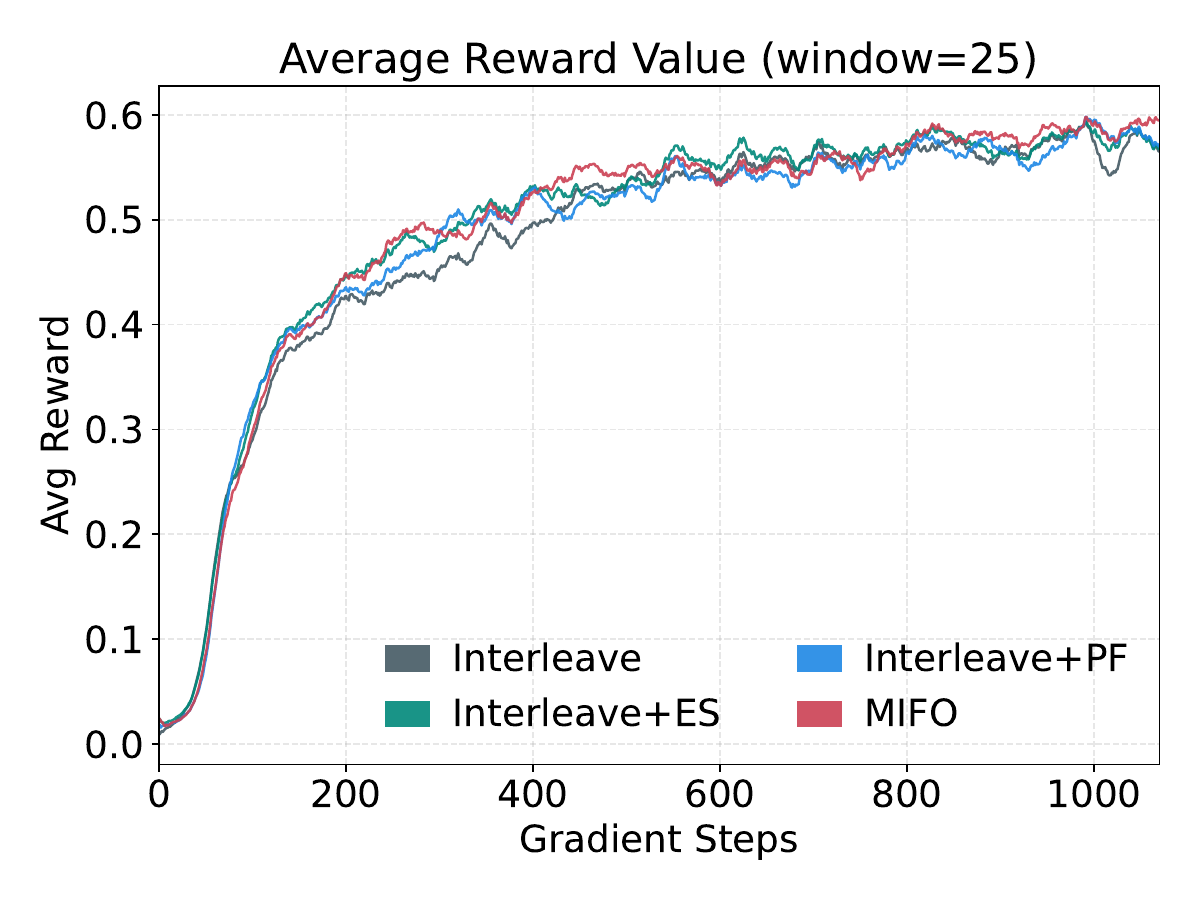}  % 可加 [page=1]
  \end{subfigure}\hfill
  \begin{subfigure}[t]{0.5\linewidth}
    \centering
    \includegraphics[width=\linewidth]{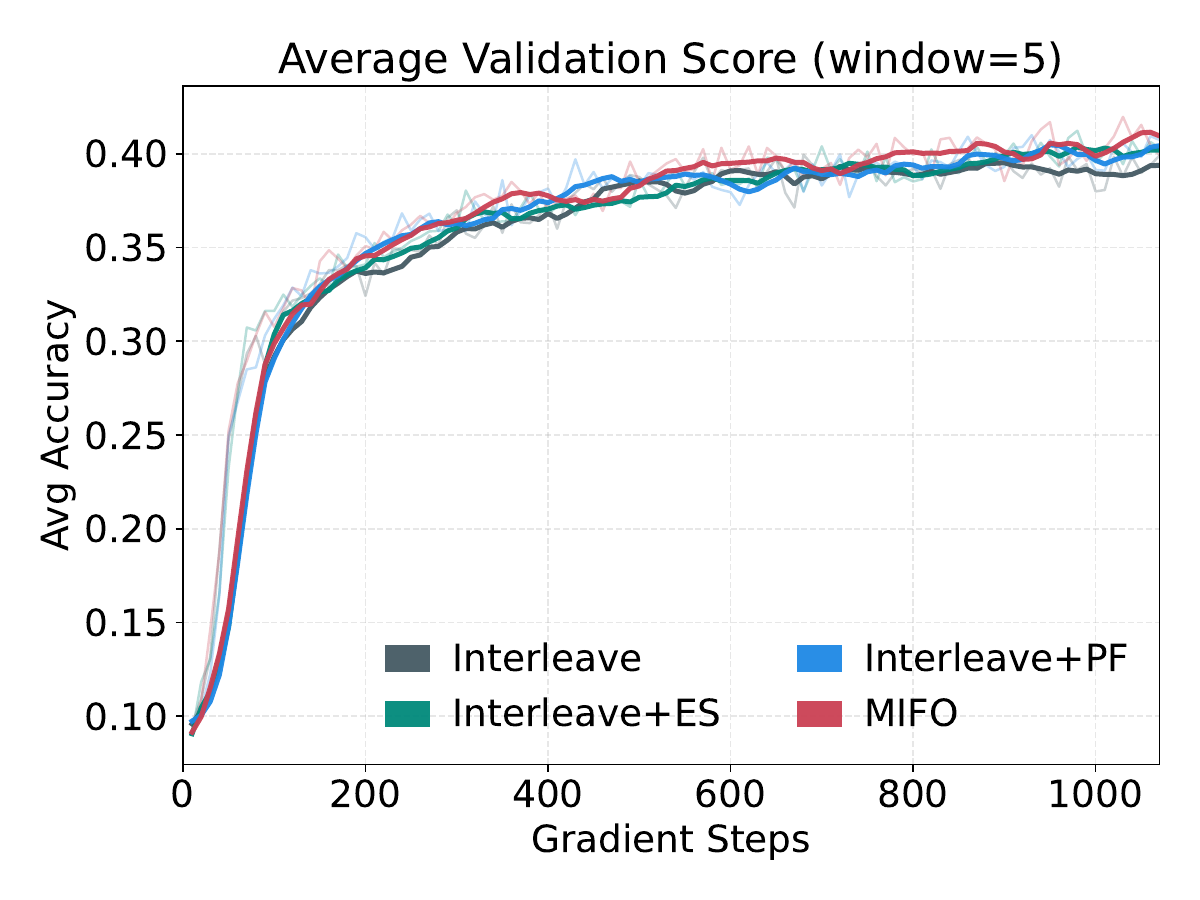} 
  \end{subfigure}
  \caption{Training dynamics for ablated models on average reward value (left) and average validation scores (right) with running average.}
  \label{fig:score-reward}
\end{figure}

\paragraph{Batch Solve} Figure~\ref{fig:batch-solve} presents training dynamics for ablated models on \textit{batch solved rate} (left) and \textit{batch all solved rate} (right). 
For both \textbf{batch solved rate} and \textbf{batch all solved rate}, Interleave + ES has comparable performance with PF, and they all outperform Interleave. \textbf{MIFO} outperforms other ablated models at most of the steps, indicating the effectiveness of combining each module: increasing the rollout accuracy during the RL training.

\paragraph{Reward Value}
Figure~\ref{fig:score-reward} (left) reports the average rollout reward during training. All variants improve rapidly at the start, then separate modestly mid-training. \textbf{Interleave+ES} and \textbf{Interleave+PF} have comparable reward value, outperforming \textbf{Interleave} and falling behind MIFO. The combined (\textbf{MIFO}) stays at or near the top throughout and converges to the highest point. These trends indicate that each module contributes positively to MIFO.

\paragraph{Average Validation Score}
Similarly, Figure~\ref{fig:score-reward} (right) supplements the ablation in Section~\ref{sec:ablation-study}. The ablated models \textbf{Interleave+ES} and \textbf{Interleave+PF} generally surpass the \textbf{Interleave} baseline, confirming the effectiveness of ES and PF. The combined \textbf{MIFO} outperforms the ablations for most of the training, supporting the choice to combine Interleave, ES, and PF for stronger reasoning performance.

\subsection{Extra Updating Dropping Experiment}
\label{app:extra-drop-exp}
\begin{wrapfigure}{r}{0.4\textwidth}
\vspace{-0.3in}
  \centering
  \includegraphics[width=0.4\textwidth]{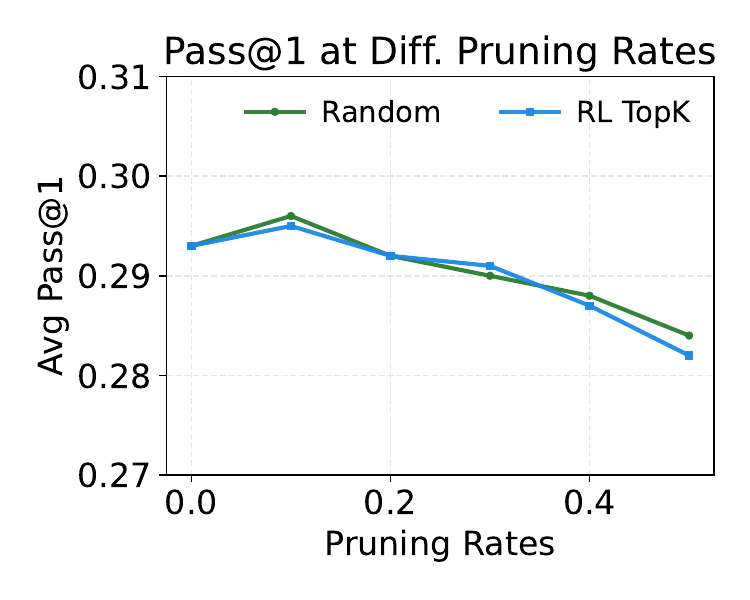}
  \caption{Average test pass@1}
% \vspace{-0.2in}
\label{fig:prune-sft-rl-2}
\end{wrapfigure}
In Section~\ref{sec:motivation}, we demonstrated SFT redundancy using experiments that randomly drop parameter updates. Building on this insight, Section~\ref{sec:method} introduces parameter freezing based on RL-derived parameter importance. While the initial experiment supports the redundancy claim, it leaves a gap between \textit{random freezing} and \textit{selective freezing}. To bridge this, we record cumulative RL-induced parameter changes and select the top-$K$ indices for dropping, using the same settings as in Section~\ref{sec:motivation}. 
Figure~\ref{fig:prune-sft-rl-2} shows no notable performance difference between random dropping and RL-based selective dropping. Thus, the redundancy conclusion remains valid for our design. We also observe that RL top-$K$ selection is slightly outperformed by random selection, suggesting partial overlap between parameters important for SFT and those important for RL.

 \section{Discussion}
 \label{app:discussion}
 \subsection{Experiment Design for Observations}
 \label{app:observation-design}
Here we further justify the experimental design in Section~\ref{sec:observation}.
Using both \textit{online} and \textit{post-hoc} sparsification probes complementary forms of redundancy and yields a stronger diagnosis of SFT and RL learning.
\textit{Online sparsification} evaluates training-time resilience. If performance remains stable when randomly dropping half of the gradient coordinates, the learning signal and optimizer dynamics are overparameterized (gradient-level redundancy) rather than dependent on precise dense updates.
\textit{Post-hoc sparsification} evaluates endpoint redundancy. If the final predictor maintains accuracy after pruning the net parameter change, a substantial portion of the learned displacement \(\Delta\) is superfluous (parameter-change redundancy). This separates trajectory effects from the equivalence class of solutions at convergence.
Together, these two regimes provide orthogonal evidence for redundancy within each training paradigm.

\subsection{Why Freezing instead of SFT Variants}
Prior work on fine-tuning LMs offers a key insight: \textit{fully fine-tuning is not always optimal; selectively updating a subset of parameters can match or even surpass full-model tuning}. Evidence for layer heterogeneity suggests that focusing updates on important components is preferable to uniform tuning~\citep{zhang2022all, lee2019would}. ULMFiT~\citep{Howard2018UniversalLM} progressively unfreezes layers to retain prior knowledge and reduce catastrophic forgetting. Parameter-efficient methods~\citep{houlsby2019parameter, li-liang-2021-prefix, hu2022lora} train only a small fraction of parameters yet achieve comparable performance. BitFit~\citep{ben-zaken-etal-2022-bitfit} reaches strong results by updating only bias terms. AdapterDrop~\citep{ruckle-etal-2021-adapterdrop} shows that adapting only important layers can be more effective and efficient.

These lessons highlight inherent drawbacks of SFT, including overfitting and unequal utility across parameters. Our dropout-like selective freezing for SFT~\citep{srivastava2014dropout, baldi2013understanding} both protects RL-acquired knowledge and reduces overfitting risk (Section~\ref{sec:observation}). While reducing data or lowering the learning rate can also lessen SFT-induced forgetting, such approaches risk underutilizing SFT knowledge.

\subsection{The Relationship and difference with Previous Works}
\label{app:relationship-works}
Our buffer is built on the buffer design of ReLIFT~\citep{ma2025learning}. ReLIFT performs online SFT using a buffer restricted to \(\text{acc}(q)=0\), i.e., questions \(q\) that GRPO fails to solve. This choice is motivated by the empirical finding that SFT helps on unsolved questions but can lose its benefit once the policy already solves them. 
In our study, we find that easier questions still contain useful supervision. Under ReLIFT’s setting, the gains from such questions may be offset by catastrophic forgetting, since ReLIFT uniformly learns all tokens, including high-confidence tokens in easy cases. In contrast, our approach sets a threshold \(\text{acc}(q) \leq p\) and combines high-entropy token selection with parameter freezing. These modules mitigate SFT–RL forgetting, enlarge the usable data pool, and exploit more of the available reasoning signal.
Compared with ReLIFT, our contribution is not merely introducing a new threshold. By targeting uncertain tokens and protecting RL-sensitive parameters, we unlock additional data utility and push beyond the prior ceiling on bufferable examples.

% \paragraph{Forgetting Risks of Previous Works} HPT,mixing RL and SFT on a single optimization obeject can't avoid catastrophic forgetting, cause it's hard to discriminate the SFT gradient from RL.

% \paragraph{sparcity RL}

% \paragraph{entropy token selection}

\end{document}